\icmltitlerunning{Taylorized Training of Neural Networks}
\def\shownotes{0}  %set 1 to show author notes
\newcommand{\authnote}[2]{$\ll$\textsf{\footnotesize #1 notes: #2}$\gg$}
\newcommand{\authnote}[2]{}
\def\Unif{{\rm Unif}}
\def\cosparam{{\rm cos\_param}}
\def\cosfunc{{\rm cos\_func}}
\definecolor{C0}{HTML}{1f77b4}
\definecolor{C1}{HTML}{ff7f0e}
\definecolor{C2}{HTML}{2ca02c}
\definecolor{C3}{HTML}{d62728}
\definecolor{C4}{HTML}{9467bd}
\colorlet{linkequation}{blue}
\renewcommand{\cite}{\citep}
\title{Taylorized Training: Towards Better Approximation of Neural
  Network Training at Finite Width}
\author{
  Yu Bai\thanks{Salesforce Research. {\tt yu.bai@salesforce.com}}
  \and
  Ben Krause\thanks{Salesforce Research. {\tt
      bkrause@salesforce.com}}
  \and
  Huan Wang\thanks{Salesforce Research. {\tt
      huan.wang@salesforce.com}}
  \and
  Caiming Xiong\thanks{Salesforce Research. {\tt
      cxiong@salesforce.com}}
  \and
  Richard Socher\thanks{Salesforce Research. {\tt
      rsocher@salesforce.com}}
}
\date{\today}
\begin{document}

% \maketitle
\twocolumn[
\icmltitle{Taylorized Training: Towards Better Approximation of \\
  Neural Network Training at Finite Width}

% % It is OKAY to include author information, even for blind
% % submissions: the style file will automatically remove it for you
% % unless you've provided the [accepted] option to the icml2020
% % package.

% % List of affiliations: The first argument should be a (short)
% % identifier you will use later to specify author affiliations
% % Academic affiliations should list Department, University, City, Region, Country
% % Industry affiliations should list Company, City, Region, Country

% % You can specify symbols, otherwise they are numbered in order.
% % Ideally, you should not use this facility. Affiliations will be numbered
% % in order of appearance and this is the preferred way.
% \icmlsetsymbol{equal}{*}

\begin{icmlauthorlist}
  \icmlauthor{Yu Bai}{sf}
  \icmlauthor{Ben Krause}{sf}
  \icmlauthor{Huan Wang}{sf}
  \icmlauthor{Caiming Xiong}{sf}
  \icmlauthor{Richard Socher}{sf}  
% \icmlauthor{Bauiu C.~Yyyy}{equal,to,goo}
% \icmlauthor{Cieua Vvvvv}{goo}
% \icmlauthor{Iaesut Saoeu}{ed}
% \icmlauthor{Fiuea Rrrr}{to}
% \icmlauthor{Tateu H.~Yasehe}{ed,to,goo}
% \icmlauthor{Aaoeu Iasoh}{goo}
% \icmlauthor{Buiui Eueu}{ed}
% \icmlauthor{Aeuia Zzzz}{ed}
% \icmlauthor{Bieea C.~Yyyy}{to,goo}
% \icmlauthor{Teoau Xxxx}{ed}
% \icmlauthor{Eee Pppp}{ed}
\end{icmlauthorlist}

\icmlaffiliation{sf}{Salesforce Research, Palo Alto, CA, USA}
% % \icmlaffiliation{to}{Department of Computation, University of Torontoland, Torontoland, Canada}
% % \icmlaffiliation{goo}{Googol ShallowMind, New London, Michigan, USA}
% % \icmlaffiliation{ed}{School of Computation, University of Edenborrow, Edenborrow, United Kingdom}

\icmlcorrespondingauthor{Yu Bai}{yu.bai@salesforce.com}
\icmlcorrespondingauthor{Ben Krause}{bkrause@salesforce.com}
% % \icmlcorrespondingauthor{Eee Pppp}{ep@eden.co.uk}

% % You may provide any keywords that you
% % find helpful for describing your paper; these are used to populate
% % the "keywords" metadata in the PDF but will not be shown in the document
\icmlkeywords{Linearized Training, Neural Tangent Kernels, Deep
  Learning Theory}
\vskip 0.3in
]

% this must go after the closing bracket ] following \twocolumn[ ...

% This command actually creates the footnote in the first column
% listing the affiliations and the copyright notice.
% The command takes one argument, which is text to display at the start of the footnote.
% The \icmlEqualContribution command is standard text for equal contribution.
% Remove it (just {}) if you do not need this facility.

\printAffiliationsAndNotice{}  % leave blank if no need to mention equal contribution
% \printAffiliationsAndNotice{\icmlEqualContribution} % otherwise use the standard text.

\begin{abstract}
  We propose \emph{Taylorized training} as an initiative towards better understanding neural network training at finite width. Taylorized training involves training the $k$-th order Taylor expansion of the neural network at initialization, and is a principled extension of linearized training---a recently proposed theory for understanding the success of deep learning.

  We experiment with Taylorized training on modern neural network architectures, and show that Taylorized training (1) agrees with full neural network training increasingly better as we increase $k$, and (2) can significantly close the performance gap between linearized and full training. Compared with linearized training, higher-order training works in more realistic settings such as standard parameterization and large (initial) learning rate. We complement our experiments with theoretical results showing that the approximation error of $k$-th order Taylorized models decay exponentially over $k$ in wide neural networks.
      
%     We propose \emph{Taylorized training} as an initiative towards better understanding neural network training at finite width. Taylorized training involves training the $k$-th order Taylor expansion of the neural network at initialization, and is a principled extension of linearized training---a recently proposed theory for understanding the success of deep learning.

%    We experiment with Taylorized training on modern neural net architectures, and show that Taylorized training (1) agrees with full neural net training increasingly better as we increase $k$, and (2) can significantly close the performance gap between linearized and full training. Compared with linearized training, higher-order training works in more practical settings such as standard parameterization and large (initial) learning rate, suggesting their broader applicability. We complement our experiments with theoretical results showing that the approximation error of $k$-th order Taylorized models decay exponentially over $k$ in wide neural networks.
\end{abstract}
\section{Introduction}
\label{section:intro}
Deep learning has made immense progress in solving artificial
intelligence challenges such as computer vision, natural language
processing, reinforcement learning, and so
on~\cite{lecun2015deep}. Despite this great success, fundamental
theoretical questions such as why deep networks train and
generalize well are only partially understood. 

A recent surge of research establishes the connection between
% the training trajectory of 
wide neural networks and their linearized
models. It is shown that wide neural networks can be trained in
a setting in which each individual weight only moves very slightly
(relative to itself), so that the evolution of the network can be
closely approximated by the evolution of the linearized model, which
when the width goes to infinity has a certain statistical limit
governed by its Neural Tangent Kernel (NTK).
Such a connection has led to provable optimization and generalization
results for wide neural nets
~\cite{li2018learning,jacot2018neural,du2018gradient,du2019gradient,zou2019gradient,lee2019wide,arora2019fine,allen2019learning},
and has inspired the design of new algorithms such as neural-based
kernel machines that achieve competitive results on benchmark learning
tasks~\cite{arora2019exact,li2019enhanced}.

% While the theory of linearized training is powerful, it is
While linearized training is powerful in theory, it is questionable
whether it really explains neural network training in practical
settings. Indeed, (1) the linearization theory requires small learning
rates or specific network parameterizations (such as the NTK
parameterization), yet in practice a large (initial) learning rate is
typically required in order to reach a good performance; (2) the
linearization theory requires a high width in order for the linearized
model to fit the training dataset and generalize, yet it is unclear
whether the finite-width linearization of practically sized network
have such capacities. Such a gap between linearized and full neural
network training has been identified in recent
work~\cite{chizat2019lazy,ghorbani2019linearized,
  ghorbani2019limitations,li2019towards}, and suggests the need for a
better model towards understanding neural network training in
practical regimes.

% Several concrete questions can be asked towards closing this gap:
% (1) does linearized training approximate full neural net training
% under the standard parameterization? (2) can we ``upgrade'' linearized
% training to a set of approxiamte dynamics that approximates standard
% neural net trainng better, yet still amenable to theoretical analysis?
% (3) is there a regime in which the approximation power of linearized
% trainig and our proposed training dynamics exhibit a significant gap?

Towards closing this gap, in this paper we propose and study
\emph{Taylorized training}, a principled generalization of linearized
training. For any neural network $f_\theta(x)$ and a given
initialization $\theta_0$, assuming sufficient smoothness, we can
expand $f_\theta$ around $\theta_0$ to the $k$-th order for any $k\ge
1$: 
\begin{align*}
  f_\theta(x) = \underbrace{f_{\theta_0}(x) + \sum_{j=1}^k
  \frac{\grad^j_\theta 
  f_{\theta_0}(x)}{j!}[\theta - \theta_0]^{\otimes j}}_{\defeq
  f^{(k)}_{\theta;\theta_0}(x)} + o\paren{\norm{\theta - \theta_0}^k}.
  \\ 
\end{align*}
The model $f^{(k)}_{\theta;\theta_0}(x)$ is exactly the linearized
model when $k=1$, and becomes $k$-th order polynomials of
$\theta-\theta_0$ that are increasingly better local approximations of
$f_\theta$ as we increase $k$. Taylorized training refers to training
these Taylorized models $f^{(k)}$ explicitly (and not necessarily
locally), and using it as a tool towards understanding the training of
the full neural network $f_\theta$. The hope with Taylorized training
is to ``trade expansion order with width'', that is, to hopefully
understand finite-width dynamics better by using a higher expansion
order $k$ rather than by increasing the width.
% \yub{How to properly cite here?}

In this paper, we take an empirical approach towards studying
Taylorized training, demonstrating its usefulness in understanding
finite-width full training\footnote{By full training, we mean the usual
  (non-Taylorized) training of the neural networks.}.
Our main contributions can be summarized
as follows:
\begin{itemize}
\item We experiment with Taylorized training on vanilla convolutional
  and residual networks in their practical training regimes (standard
  parameterization + large initial learning rate) on CIFAR-10. We show
  that Taylorized training gives increasingly better approximations of
  the training trajectory of the full neural net as we increase the
  expansion order $k$, in both the parameter space and the function
  space (Section~\ref{section:experiments}). This is not necessarily
  expected, as higher-order Taylorized models are no longer guaranteed
  to give better approximations when parameters travel significantly,
  yet empiricially they do approximate full training better.
\item We find that Taylorized models can significantly close the 
  performance gap between fully trained neural nets and their
  linearized models at finite width. Finite-width linearized networks
  typically has over 40\% worse test accuracy than their fully
  trained counterparts, whereas quartic (4th order) training is only
  10\%-15\% worse than full training under the same setup.
  % Our quadratic models achieve better
  % accuracy than the best known finite-width linearized models, and our
  % quartic (4th order) WideResNet can nearly match a fully trained
  % 4-layer CNN.
\item We demonstrate the potential of Taylorized training as a tool
  for understanding \emph{layer importance}. Specifically,
  higher-order Taylorized training agrees well with full training in
  layer movements, i.e. how far each layer travels from its
  initialization, whereas linearized training does not agree well. %  and
  % has a less adaptive layer movement curve throughout training.
\item We provide a theoretical analysis on the approximation power of
  Taylorized training (Section~\ref{section:theory}). We prove that
  $k$-th order Taylorized training 
  approximates the full training trajectory with error bound
  $O(m^{-k/2})$ on a wide two-layer network with width $m$. This
  extends existing results on linearized training and provides a
  preliminary justification of our experimental findings.
  % \item We propose specific parameterizations under which the
%   quadratic model approximates much better than the linearized
%   model. More generally, there exists specific parameterizations
%   under which the $k$-th order Taylorized model approximates the NN
%   training trajectory much better than all $\le k-1$-th order models.
\end{itemize}
\paragraph{Additional paper organization}
We provide preliminaries in Section~\ref{section:prelim}, review
linearized training in Section~\ref{section:linearized}, describe
Taylorized training in more details in
Section~\ref{section:taylorized}, and review additional related work
in Section~\ref{section:related}. Additional experimental results are
reported in Appendix~\ref{appendix:additional-exps}.

% \yub{Add pointer to additional experiments in the appendix.}
\paragraph{A visualization of Taylorized training}
A high-level illustration of our results is provided in
Figure~\ref{figure:cnnthick-trajectory}, which visualizes the training
trajectories of a 4-layer convolutional neural network and its
Taylorized models. Observe that the linearized model struggles to
progress past the initial phase of training and is a rather poor
approximation of full training in this setting, whereas higher-order
Taylorized models approximate full training significantly better.

\begin{figure}
  \centering
  \includegraphics[width=0.5\textwidth]{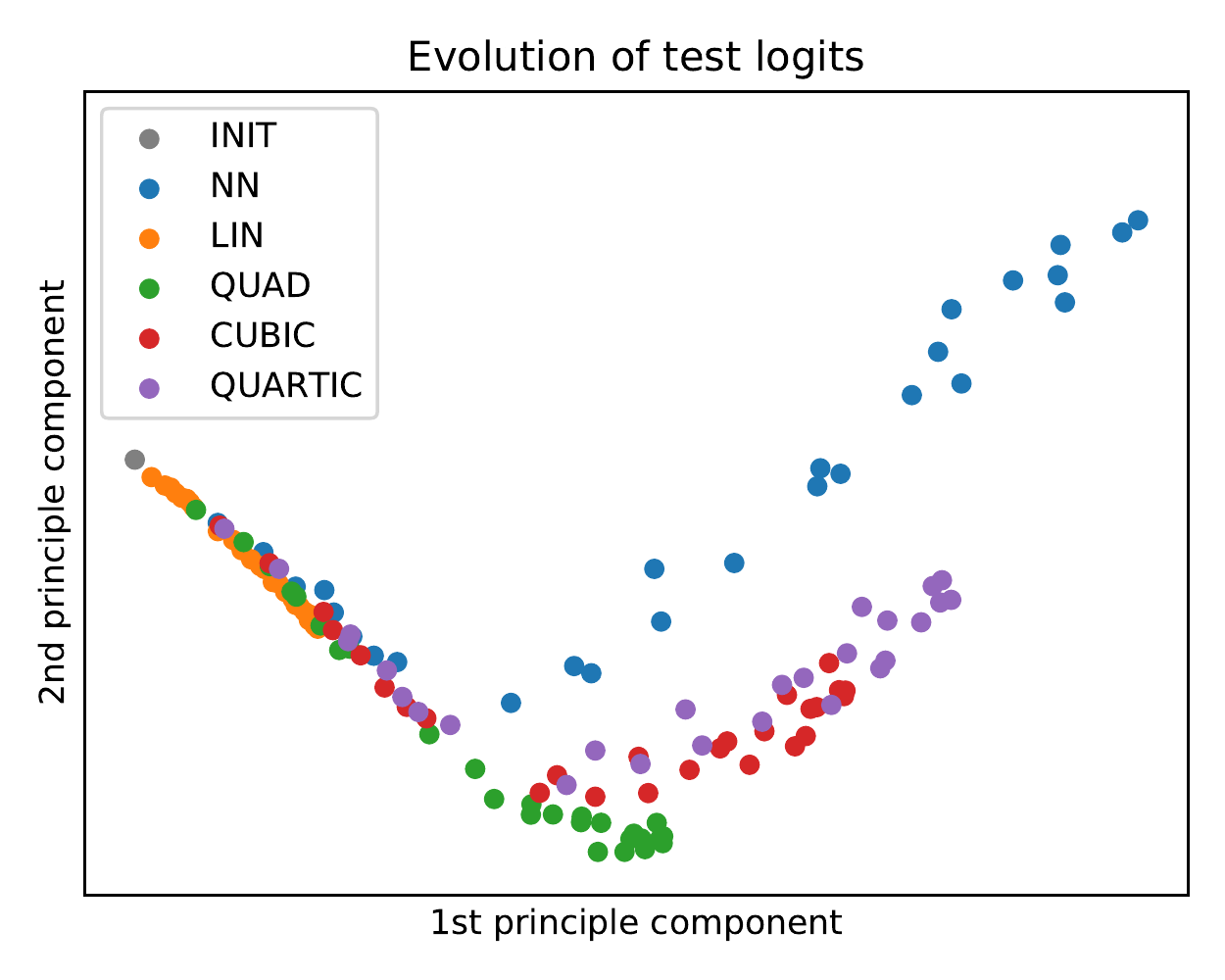}
  \caption{{\bf Function space dynamics of a 4-layer CNN under
    full training ({\bf \color{C0} NN}) and Taylorized training of
    order 1-4 ({\bf \color{C1} linearized}, {\bf \color{C2}
      quadratic}, {\bf \color{C3} cubic}, {\bf \color{C4}
      quartic})}. All models are trained on CIFAR-10 with the same
    initialization + optimization setup, and we plot the
    test logits on the trained models in the first 20 epochs. Each
    point is a 2D PCA embedding of the test logits of the corresponding
    model. Observe that Taylorized training becomes an increasingly better
    approximation of full NN training as we increase the expansion
    order $k$.}
  \label{figure:cnnthick-trajectory}
\end{figure}
\section{Preliminaries}
\label{section:prelim}

% \paragraph{Problem setup}
We consider
% training a neural network $f_\theta:\R^d\to\R^C$ for
% solving
the supervised learning problem
\begin{align*}
  \minimize ~~ L(\theta)=\E_{(x,y)\sim P}[\ell(f_\theta(x), y)],
\end{align*}
where $x\in\R^d$ is the input, $y\in\mc{Y}$ is the label,
$\ell:\R^C\times\mc{Y}\to\R$ is a convex loss function,
$\theta\in\R^p$ is the learnable parameter, and $f_\theta:\R^d\to\R^C$
is the neural network that maps the input to the output
(e.g. the prediction in a regression problem, or the vector of logits in
a classification problem).

This paper focuses on the case where $f_\theta$ is a (deep) neural
network. % , including feedforward, convolutional, and residual.
A standard feedforward neural network with $L$ layers is defined
through $f_\theta(x)=h^L$, where $h^0=x^0=x$, and 
\begin{align}
  \label{equation:standard-param}
  h^{\ell+1} = W^{\ell+1} x^\ell +
  b^{\ell+1},~~~x^{\ell+1}=\sigma(h^{\ell+1})
\end{align}
for all $\ell\in\set{0,\dots,L-1}$, where
$W^\ell\in\R^{d_{\ell}\times d_{\ell-1}}$ are weight matrices,
$b^\ell\in\R^{d_{\ell}}$ are biases, and $\sigma:\R\to\R$ is an
activation function (e.g. the ReLU) applied entry-wise.  We will not
describe other architectures in detail; for the purpose of
describing our approach and empirical results, it suffices to think of
$f_\theta$ as a \emph{general nonlinear function} of the parameter
$\theta$ (for a given input $x$.)

Once the architecture is chosen, it remains to define an
initialization strategy and a learning rule.
\paragraph{Initialization and training}
We will mostly consider
the \emph{standard initialization} (or variants of it such as
Xavier~\cite{glorot2010understanding} or Kaiming~\cite{he2015delving})
in this paper, which for a feedforward network is defined as
\begin{align*}
  & \theta_0 = \set{(W^{\ell}_0, b^{\ell}_0)}_{\ell=1}^{L+1},~{\rm
    where} \\
  & W^{\ell}_{0,ij} \sim \normal(0, 1/d_\ell)~~~{\rm
    and}~~~b^\ell_{0,i}\sim\normal(0, 1),
\end{align*}
and can be similarly defined for convolutional and residual networks.
This is in contrast with the NTK
parameterization~\citep{jacot2018neural}, which encourages the weights
to move significantly less.

We consider training the neural network via (stochastic) gradient
descent:
\begin{align}
  \label{equation:full-training}
  \theta_{t+1} = \theta_t - \eta_t \grad L(\theta_t).
\end{align}
We will refer to the above as \emph{full training} of neural networks,
so as to differentiate with various approximate training regimes to be
introduced below.

% simplicity of presentation, we will describe our approach in terms
% of feedforward networks but use it on all architectures; we
% refer the readers to \yub{cite} for the precise definition of the
% convolutional and residual architectures. 
\section{Linearized Training and Its Limitations}
\label{section:linearized}

We briefly review the theory of linearized
training~\citep{lee2019wide,chizat2019lazy} for explaining the
training and generalization success of neural networks, and provide
insights on its limitations. % as well as potential ways for improving
% upon it.

\subsection{Linearized training and Neural Tangent Kernels}
The theory of linearized training begins with the observation that a
neural network near init can be accurately approximated by a
\emph{linearized} network. Given an initialization $\theta_0$ and an
arbitrary $\theta$ near $\theta_0$, we have that
\begin{align*}
  f_\theta(x) = \underbrace{f_{\theta_0}(x) + \<\grad f_{\theta_0}(x), \theta -
  \theta_0\>}_{\defeq f^{\rm lin}_\theta(x)} + o\paren{\norm{\theta -
  \theta_0}},
\end{align*}
that is, the neural network $f_\theta$ is approximately equal to the
linearized network $f^{\rm lin}_\theta$.
Consequently, near
$\theta_0$, the trajectory of minimizing $L(\theta)$ 
can be well approximated by the trajectory of \emph{linearized
  training}, i.e. minimizing
\begin{align*}
  L^{\rm lin}(\theta) \defeq \E_{(x,y)\sim P}[\ell(f^{\rm
  lin}_\theta(x), y)],
\end{align*}
which is a convex problem and enjoys 
convergence guarantees.

Furthermore, linearized training can approximate
% not only the local behavior, but also
the entire trajectory of full training provided that we are
in a certain \emph{linearized regime} in which we use
\begin{itemize}
\item Small learning rate, so that $\theta$ stays in a small
  neighborhood of $\theta_0$ for any fixed amount of time;
\item Over-parameterization, so that such a neighborhood gives a
  function space that is rich enough to contain a point $\theta$ so
  that $f^{\rm lin}_\theta$ can fit the entire training dataset.
\end{itemize}
%   overparameterized} (wide) network and use a {\bf small learning
%   rate}, so that $\theta$ indeed stays in a small neighborhood of
% $\theta_0$, but the neighborhood is rich enough (because of the width)
% to contain a point $\theta$ such that $f^{\rm lin}_\theta$ fits the
% entire training dataset.
As soon as we are in the above linearized regime,
gradient descent is guaranteed to reach a global minimum
~\citep{du2019gradient,allen2019convergence,zou2019gradient}.
Further, as the width goes to infinity, due to randomness in the
initialization $\theta_0$, the function space containing such
linearized models goes to a statistical limit governed by the Neural
Tangent Kernels (NTKs)~\cite{jacot2018neural}, so that wide networks
trained in this linearized regime generalize as well as a kernel
method~\cite{arora2019fine,allen2019learning}.

\subsection{Unrealisticness of linearized training in practice}
%Though powerful, the theory of linearized training does not yet
%explain the full picture of neural network training---a number of
%limitations of linearized training have indeed been identified in both
%theory and practice~\yub{maybe cite?}.

Our key concern about the theory of linearized training
% which this
% paper will focus on, is the mismatch in training regimes---
is that there are significant differences between training regimes in
which the linearized approximation is accurate, and regimes in which
neural nets typically attain their best performance in practice. More
concretely,
\begin{enumerate}[(1)]
\item Linearized training is a good approximation of full training
  under {\bf small learning rates}\footnote{Or large learning rates
    but on the NTK parameterization~\cite{lee2019wide}.} in which each
  individual weight barely moves. However,
  % it is frequently observed that
  neural networks typically attain their best test performance when using a
  {\bf large
    (initial) learning rate}, in which the weights move significantly
  in a way not explained by linearized training~\cite{li2019towards};
\item Linearized networks are powerful models on their own when the
  base architecture is {\bf over-parameterized}, but can be rather
  poor when the network is of a {\bf practical
    size}. Indeed, infinite-width linearized models such as CNTK
  achieve competitive performance on benchmark
  tasks~\cite{arora2019exact,arora2019harnessing}, yet their
  finite-width counterparts often perform significantly worse
  % than
  % both the infinite-width versions and the fully-trained neural
  % nets
  ~\cite{lee2019wide,chizat2019lazy}.
\end{enumerate}
\section{Taylorized Training}
\label{section:taylorized}
Towards closing this gap between linearized and full training, we
propose to study \emph{Taylorized training}, a
principled extension of linearized training. Taylorized training
involves training \emph{higher-order expansions} of the neural network
around the initialization. For
any $k\ge 1$---assuming sufficient smoothness---we can Taylor expand
$f_\theta$ to the $k$-th order as
\begin{align*}
  f_{\theta}(x) = \underbrace{f_{\theta_0}(x) + \sum_{j=1}^k
  \frac{\grad^j f_{\theta_0}(x)}{j!}[\theta - \theta_0]^{\otimes
  j}}_{\defeq f^{(k)}_{\theta}(x)} + o\paren{\norm{\theta -
  \theta_0}^k},
\end{align*}
where we have defined the $k$-th order Taylorized model
$f^{(k)}_{\theta}(x)$. % \footnote{We will drop the
% dependence on $\theta_0$ when it's clear from the context.}.
The Taylorized model $f^{(k)}_\theta$ reduces to the linearized model when
$k=1$, and is a $k$-th order polynomial model for a general $k$, where
the ``features'' are
$\grad^j f_{\theta_0}(x)\in\R^{C\times p^{\otimes j}}$ (which depend
on the architecture $f$ and initialization $\theta_0$), and the
``coefficients'' are $(\theta-\theta_0)^{\otimes j}$ for
$j=1,\dots,k$.

Similar as linearized training, we
define Taylorized training as the process (or trajectory) for training
$f^{(k)}_\theta$ via gradient descent, starting from the
initialization $\theta=\theta_0$. Concretely, the trajectory
for $k$-th order Taylorized training will be denoted as
$\theta^{(k)}_t$, where % (for the case of gradient descent)
\begin{equation}
  \label{equation:taylorized-training}
  \begin{aligned}
    & \theta^{(k)}_{t+1} = \theta^{(k)}_t - \eta_t \grad
    L^{(k)}(\theta^{(k)}_t),~~{\rm where} \\
    & L^{(k)}(\theta) \defeq \E_{(x,y)\sim P}[\ell(f^{(k)}_\theta(x), y)].
  \end{aligned}
\end{equation}

Taylorized models arise from a similar principle as linearized models
(Taylor expansion of the neural net), and gives increasingly better
approximations of the neural network (at least locally) as we increase
$k$. Further, higher-order Taylorized training ($k\ge 2$) are no
longer convex problems, yet they model the non-convexity of full
training in a mild way that is potentially amenable to theoretical
analyses. Indeed, quadratic training ($k=2$) been shown to enjoy a
nice optimization landscape and achieve better sample complexity than
linearized training on learning certain simple
functions~\cite{bai2020beyond}. Higher-order training also has the
potential to be understood through its polynomial structure and its
connection to tensor decomposition
problems~\cite{mondelli2019connection}.

\paragraph{Implementation}
Naively implementing Taylorization by directly computing higher-order
derivative tensors of neural networks is prohibitive in both memory
and time. Fortunately, Taylorized models can be efficiently
implemented through a series of nested Jacobian-Vector Product
operations (JVPs). Each JVP operation can be computed with the
$\mathcal{R}$-operator algorithm of \citet{pearlmutter1994fast}, which
gives directional derivatives through arbitrary differentiable
functions, and is the transpose of backpropagation.

For any function
$f$ with parameters $\theta_0$, we denote its JVP with respect to the
direction $\Delta\theta\defeq \theta-\theta_0$ using the notation of \citet{pearlmutter1994fast} by  
\begin{equation}
  \mathcal{R}_{\Delta \theta}(f_{\theta_0}(x)) \defeq
  \frac{\partial}{\partial r}f_{(\theta_0+r\Delta \theta)}(x)|_{r=0}.
\end{equation}

% We use the notation $\mathcal{R}^k_{\Delta \theta}(f(\theta_0))$ to describe $k$ nested applications of the $\mathcal{R}$-operator, giving the $k$-th order term in the Taylor expansion, where 
% \begin{equation}
%      \brac{\mathcal{R}^k_{\Delta \theta}(f)} (\theta_0)=\left\{
%     \begin{aligned}
%         & \mathcal{R}_{\Delta \theta}(f(\theta_0))~~~{\rm for}~k=1, \\
%         & \mathcal{R}_{\Delta \theta}(\mathcal{R}^{k-1}_{\Delta \theta}(f(\theta_0)))~~~{\rm for}~k\ge 2,
%     \end{aligned}
%     \right.
% \end{equation}
% and nested $\mathcal{R}$-operator functions are given by
% \begin{equation}
%  \mathcal{R}_{\Delta \theta}(\mathcal{R}^{k}_{\Delta \theta}(f(\theta_0))) = \frac{\partial}{\partial r} \mathcal{R}^{k}_{\Delta \theta}(f(\theta_0+r{\Delta \theta}))|_{r=0}.
% \end{equation}

The $k$-th order Taylorized model can be computed as
\begin{equation}
  f^{(k)}_\theta(x) = f_{\theta_0}(x) + \sum_{j=1}^k
  \frac{\mathcal{R}^{j}_{\Delta\theta}(f_{\theta_0}(x))}{j!},
\end{equation} 
where $\mc{R}^j$ is the $j$-times nested evaluation of the
$\mc{R}$-operator. 

Our implementation uses {\tt
  Jax}~\cite{bradbury2018jax} and {\tt
  neural\_tangents}~\cite{novak2020neural} which has built-in support
for Taylorizing any function to an arbitrary order based on nested
JVP operations.

\section{Experiments}
\label{section:experiments}

\begin{table*}
  \small
  \centering
  \begin{tabular}{l@{\ \ }|@{\ \ }c@{\ \ }c@{\ \ }| c@{\ \ }c@{\ \ }c@{\ \ }c@{\ \ }c@{\ \ }c@{\ \ } c@{\ \ } @{\ \ }c@{\ \ }c@{\ \ }}
    \toprule
    Name & Architecture & Params& Train For & Batch & Accuracy & Opt &
                                                                       Rate
    & Grad Clip & LR Decay Schedule \\ 
    \midrule
    CNNTHIN & CNN-4-128 & 447K & 200 epochs & 256 & 81.6\%
                                                          & SGD & 0.1
    & 5.0
                                                                       
    & 10x drop at 100, 150 epochs \\
   CNNTHICK & CNN-4-512 & 7.10M & 160 epochs & 64 & 85.9\%
                                                          & SGD &  0.1
    & 5.0
    & 10x drop
    at 80, 120 epochs\\
    \midrule
    WRNTHIN & WideResNet-16-4-128 & 3.22M & 200 epochs & 256 & 88.1\% &
                                                                  SGD
                                                                     &
                                                                       1e-1.5
    & 10.0 & 10x drop at 100, 150 epochs \\ 
    WRNTHICK & WideResNet-16-8-256 & 12.84M & 160 epochs & 64 & 91.7\%
                                                               & SGD & 
                                                                    1e-1.5
    & 10.0 & 10x drop at 80, 120 epochs \\ 
    \bottomrule
  \end{tabular}
  \caption{{\bf Our architectures and training setups.} CNN-$L$-$C$ stands for a
  CNN with depth $L$ and $C$ channels per
  layer. WideResNet-$L$-$k$-$C$ stands for a WideResNet with depth $L$, widening
  factor $k$, and $C$ channels in the first convolutional
  layer.}
  \label{table:architectures}
\end{table*}

We experiment with Taylorized training on convolutional and residual
networks for the image classification task on CIFAR-10. % Our
% experiments demonstrate the advantage of Taylorized training in terms
% of both approximating full training and reaching good test performance
% in practical training regimes.

\subsection{Basic setup}
We choose four representative architectures for the image
classification task: two CNNs with 4 layers + Global Average Pooling
(GAP) with width $\set{128, 512}$, and two
WideResNets~\cite{zagoruyko2016wide} with depth 16 and different
widths as well. All networks use standard parameterization and are
trained with the cross-entropy loss\footnote{Different from prior work
  on linearized training which primarily focused on the squared
  loss~\cite{arora2019exact,lee2019wide}.}. We optimize the training
loss using SGD with a large initial learning rate + learning rate
decay.\footnote{We also use gradient clipping with a large clipping
  norm in order to prevent occasional gradient blow-ups.}

% \yub{Highlight we use practical training regimes? (large lr and
%   standard parameterization.)}
For each architecture, the
initial learning rate was tuned within
$\set{10^{-2}, 10^{-1.5}, 10^{-1}, 10^{-0.5}}$ and chosen to be the
largest learning rate under which the full neural network can stably
train (i.e. has a smoothly decreasing training loss). We use standard
data augmentation (random crop, flip, and standardize) as a
optimization-independent way for
improving generalization.
% \footnote{As opposed
%   to weight decay which can potentially twist the training trajectory
% differently for full and Taylorized training.}
Detailed training
settings for each architecture are summarized in
Table~\ref{table:architectures}.

% \paragraph{Training Taylorized models from a common initialization}
\paragraph{Methodology}
For each architecture, we train Taylorized models of order
$k\in\set{1,2,3,4}$ (referred to as \{linearized, quadratic, cubic,
quartic\} models) from the {\bf same initialization} as full training
using the {\bf exact same optimization setting} (including learning
rate decay, gradient clipping, minibatching, and data augmentation
noise). This allows us to eliminate the effects of optimization setup
and randomness, and examine the agreement between Taylorized and full
training in identical settings.

\subsection{Approximation power of Taylorized training}
\label{section:approximation-exp}
% \yub{better subtitle?}
We examine the approximation power of Taylorized training through
comparing Taylorized training of different orders in terms of both the
training trajectory and the test performance.

\paragraph{Metrics}
We monitor the training loss and test accuracy for
both full and Taylorized training. We also evaluate the approximation
error between Taylorized and
full training quantitatively through the following similarity metrics
between models:  % in both the parameter space and the function space:
\begin{itemize}
\item Cosine similarity in the parameter space, defined as
  \begin{equation*}
    \cosparam_t \defeq \frac{\<\theta^{(k)}_t - \theta_0,
      \theta_t - \theta_0\>}{\big\|\theta^{(k)}_t -
        \theta_0\big\|\big\|\theta_t - \theta_0\big\|},
  \end{equation*}
  where (recall~\eqref{equation:taylorized-training}
  and~\eqref{equation:full-training}) $\theta^{(k)}_t$ and
  $\theta_t$ denote the parameters in $k$-th order 
  Taylorized training and full training, and $\theta_0$ is their
  common initialization.
\item Cosine similarity in the function space, defined as
  \begin{equation*}
    \cosfunc_t \defeq \frac{\<f^{(k)}_{\theta^{(k)}_t} - f_{\theta_0},
      f_{\theta_t} -
      f_{\theta_0}\>}{\big\|f^{(k)}_{\theta^{(k)}_t}-
        f_{\theta_0}\big\|\big\|f_{\theta_t} - 
        f_{\theta_0}\big\|},
  \end{equation*}
  where we have overloaded the notation
  $f\in\R^{C\times n_{\rm test}}$ (and similarly $f^{(k)}$) to denote
  the output (logits) of a model on the test dataset\footnote{We
    centralized (demeaned) the logits for each example along the
    classification axis so as to remove the effect of the invariance
    in the softmax mapping.
  }.
\end{itemize}
% For the purpose of comparing models, these two metrics are
% complementary to each other: the function space similarity measures
% distance between models in a sensible way (by comparing test
% predictions), though it can be strongly tied to the performance of the
% model (two high-performing models would output similar logits); the
% parameter space similarity looks at the raw parameter space without
% knowledge about the actual architecture, yet is itself a metric that
% is more independent of the performance of the model.

\begin{figure*}
  \centering
    \includegraphics[width=0.24\textwidth]{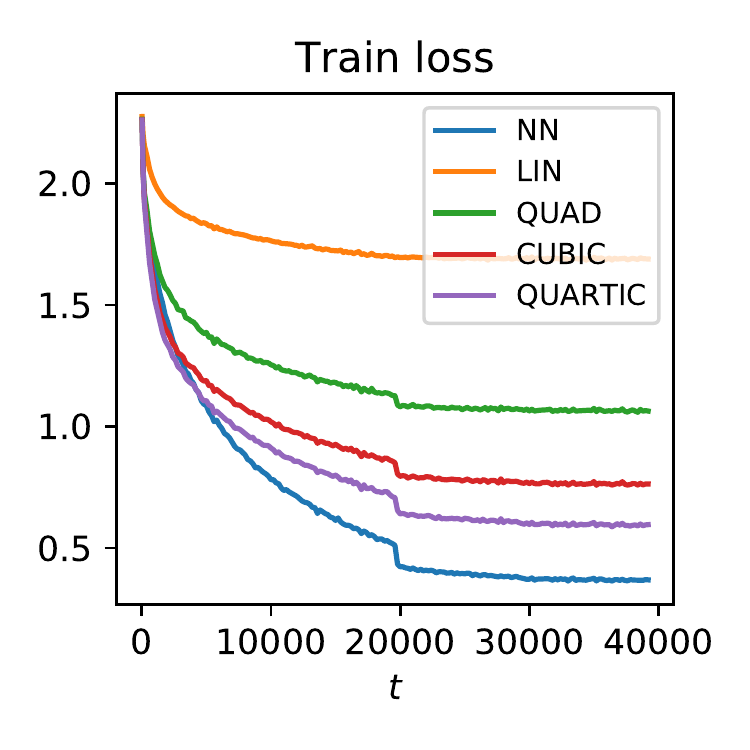}
    \includegraphics[width=0.24\textwidth]{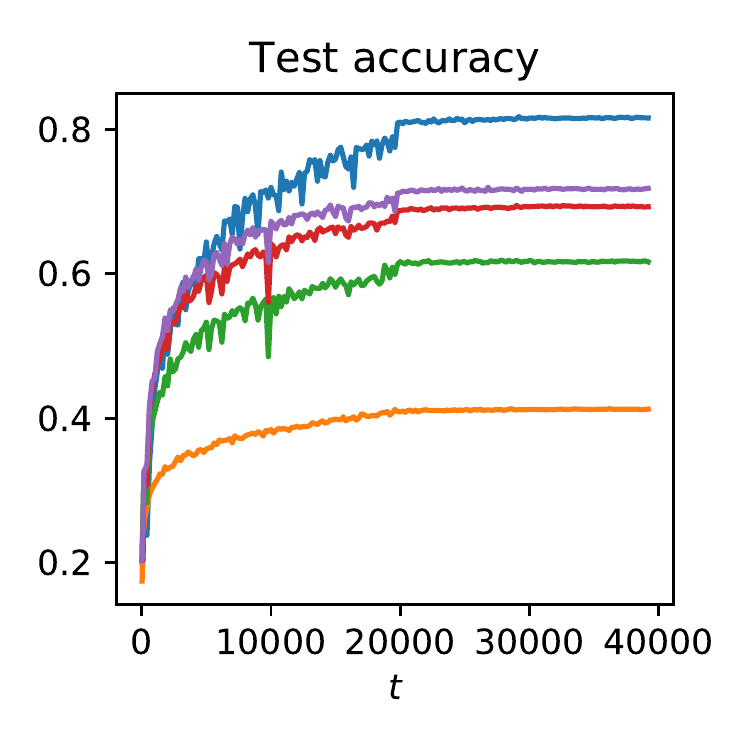}
    \includegraphics[width=0.24\textwidth]{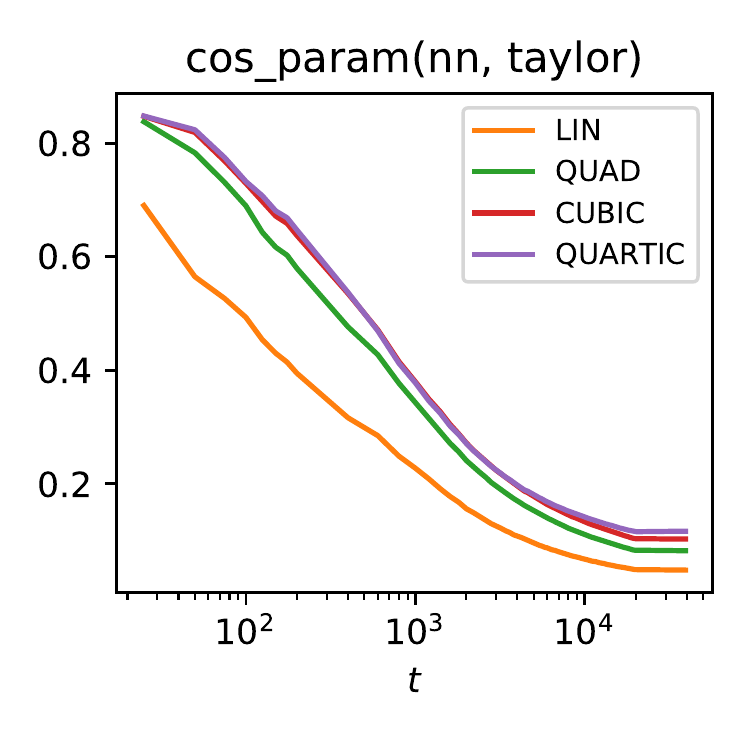}
    \includegraphics[width=0.24\textwidth]{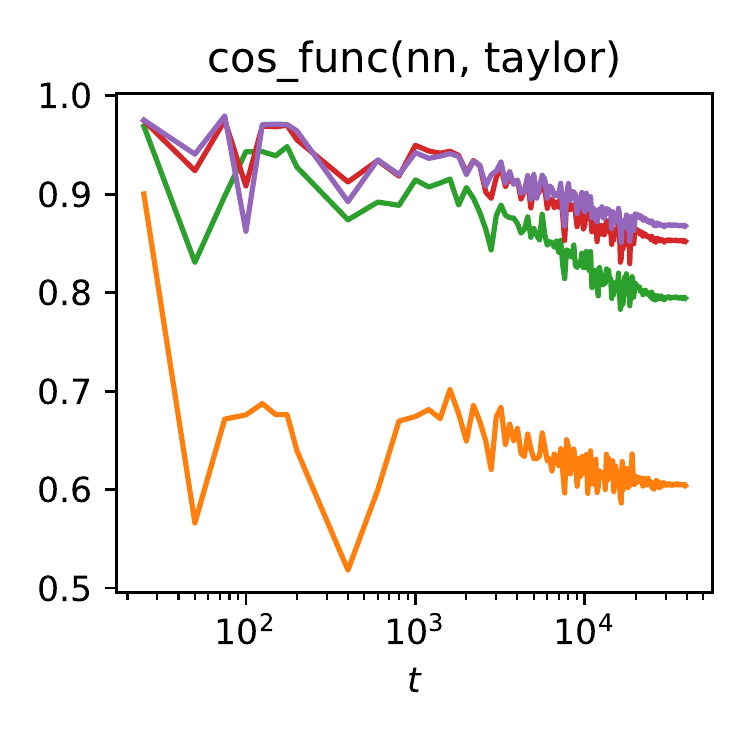}
    \caption{{\bf $k$-th order Taylorized training approximates full
      training increasingly better with $k$ on the CNNTHIN
      model.} Training statistics are plotted for \{\textbf{{\color{C0}
          full}, {\color{C1} linearized}, {\color{C2} quadratic},
        {\color{C3} cubic}, {\color{C4} quartic}}\} models. Left to
      right: (1) training loss; (2) test accuracy; (3) cosine
      similarity between Taylorized and full training in the parameter
      space; (4) cosine similarity between Taylorized and full
      training in the function (logit) space. All models are trained on
      CIFAR-10 for 39200 steps, and a 10x learning rate decay happened
      at step \{19600, 29400\}.}
  \label{figure:cnnthin-approximation}
\end{figure*}

\begin{table*}
  \centering
  \begin{tabular}{c|c|c|c|c}
    \hline
    Architecture & CNNTHIN & CNNTHICK & WRNTHIN & WRNTHICK \\ 
    \hline
    Linearized ($k=1$) & 41.3\% & 49.0\% & 50.2\% & 55.3\% \\
    Quadratic ($k=2$) & 61.6\% & 70.1\% & 65.8\% & 71.7\% \\
    Cubic ($k=3$) & 69.3\% & 75.3\% & 72.6\% & 76.9\% \\
    Quartic ($k=4$) & 71.8\% & 76.2\% & 75.6\% & 78.7\% \\
    \hline
    Full network & 81.6\% & 85.9\% & 88.1\% & 91.7\% \\
    \hline
  \end{tabular}
  \caption{{\bf Final test accuracy on CIFAR-10 for Taylorized models
    trained under the same optimization setup as full neural
    nets.} Details about the architectures and their training setups
    can be found in Table~\ref{table:architectures}.}
  \label{table:taylorized-performance}
\end{table*}

\vspace{-0.3cm}
\paragraph{Results}
Figure~\ref{figure:cnnthin-approximation} plots training and
approximation metrics for full and Taylorized training on the CNNTHIN
model.
% It can be seen that
% Taylorized models are good interpolators of linearized and full
% training --
Observe that Taylorized models are much better approximators than
linearized models in both the parameter space and the function
space---both cosine similarity curves shift up as we increase $k$ from
1 to 4. Further, for the cubic and quartic 
models, the cosine similarity in the logit space stays above 0.8 over
the entire training trajectory (which includes both weakly and
strongly trained models), suggesting a fine agreement between
higher-order Taylorized training and full training. Results
for \{CNNTHICK, WRNTHIN, WRNTHICK\} are qualitatively similar and are
provided in Appendix~\ref{appendix:approximation}.

We further report the final test performance of the Taylorized models
on all architectures in Table~\ref{table:taylorized-performance}. We
observe that
% \vspace{-0.1cm}
\begin{enumerate}[(1)]
\item Taylorized models can indeed close the performance gap between
  linearized and full training: linearized models are typically
  30\%-40\% worse than fully trained networks, whereas quartic (4th
  order Taylorized) models are within \{10\%, 13\%\} of a fully trained
  network on \{CNNs, WideResNets\}.
\item All Taylorized models can benifit from increasing the width
  (from CNNTHIN to CNNTHICK, and WRNTHIN to WRNTHICK), but the
  performance of higher-order models ($k=3,4$) are generally less
  sensitive to width than lower-order models ($k=1,2$), suggesting
  their realisticness for explaining the training behavior of
  practically sized finite-width networks.
\end{enumerate}

\paragraph{On finite- vs. infinite-width linearized models}
We emphasize that the performance of our baseline linearized models in
Table~\ref{table:taylorized-performance}  (40\%-55\%) is at finite
width, and is thus not directly comparable to
existing results on infinite-width linearized models such as
CNTK~\cite{arora2019exact}. %  or its enhanced
% versions~\cite{li2019enhanced}
It is possible to achieve stronger results with finite-width
linearized networks by using the NTK parameterization, which more
closely resembles the infinite width limit. However, full neural net
training with this re-parameterization results in significantly
weakened performance, suggesting its unrealisticness. The
best documented test accuracy of a finite-width linearized network on
CIFAR-10 is $\sim$65\%~\cite{lee2019wide}, and due to the NTK
parameterization, the neural network trained under these same settings
only reached $\sim$70\%. In contrast, our best higher order models can
approach 80\%, and are trained under realistic settings where a neural
network can reach over 90\%. 

%Indeed, it has been observed that finite-width linearized networks are hard to optimize (e.g. to nearly 0 training loss) because of poorconditioning~\cite{chizat2019lazy}. 

\subsection{Agreement on layer movements}
% \yub{better subtitle?}

Layer importance, i.e. the contribution and importance of each layer
in a well-trained (deep) network, has been identified as a useful
concept towards building an architecture-dependent understanding on
neural network training~\cite{zhang2019all}. 
Here we demonstrate that
higher-order Taylorized training has the potential to lead to better
understandings on the layer importance in full training.

\paragraph{Method and result} We examine \emph{layer movements},
i.e. the distances each layer has travelled along training, and
illustrate it on both full and Taylorized
training.\footnote{Taylorized models $f^{(k)}_\theta$ are polynomials
  of $\theta$ where $\theta$ has the same shape as the base
  network. By a ``layer'' in a Taylorized model, we mean the partition
  that's same as how we partition $\theta$ into layers in the base
  network.}
In Figure~\ref{figure:layer-movement}, we plot the layer movements on
the CNNTHIN and WRNTHIN models. Compared with linearized training,
quartic training agrees with full training much better in the shape of
the layer movement curve, both at an early stage and at
convergence. Furthermore, comparing the layer movement curves between the
10th epoch and at convergence, quartic training seems to be able to
adjust the shape of the movement curve much better than linearized
training. % , suggesting their advantage for understanding layer
% importance.

% \paragraph{More on layer importance}
Intriguing results about layer importance has also been (implicitly)
shown in the study of infinite-width linearized models (i.e. NTK type
kernel methods). For example, it has been observed that the CNN-GP
kernel (which corresponds to training the top layer only) has
consistently better generalization performance than the CNTK kernel
(which corresponds to training all the
layer)~\cite{li2019enhanced}. In other words, when training an
extremely wide convolutional net on a finite dataset, training the
last layer only gives a better generalization performance (i.e. a
better implicit bias); existing theoretical work on linearized
training fall short of understanding layer importance in these
settings. We believe Taylorized training can serve as an (at
least empirically) useful tool towards understanding layer importance.

\begin{figure*}[t]
  \centering
  \begin{subfigure}[t]{0.48\textwidth}
    \centering
    \includegraphics[width=\textwidth]{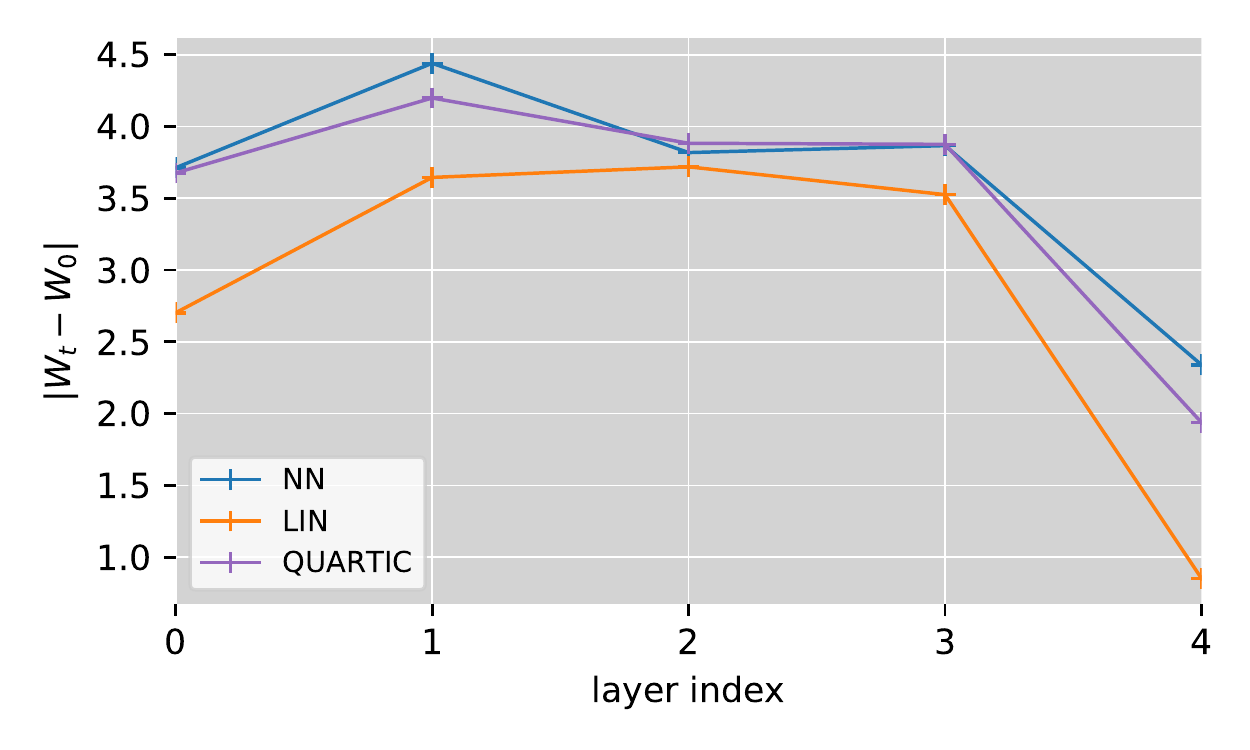}
    \caption{CNNTHIN at 10 epochs}
  \end{subfigure}
  \begin{subfigure}[t]{0.48\textwidth}
    \centering
    \includegraphics[width=\textwidth]{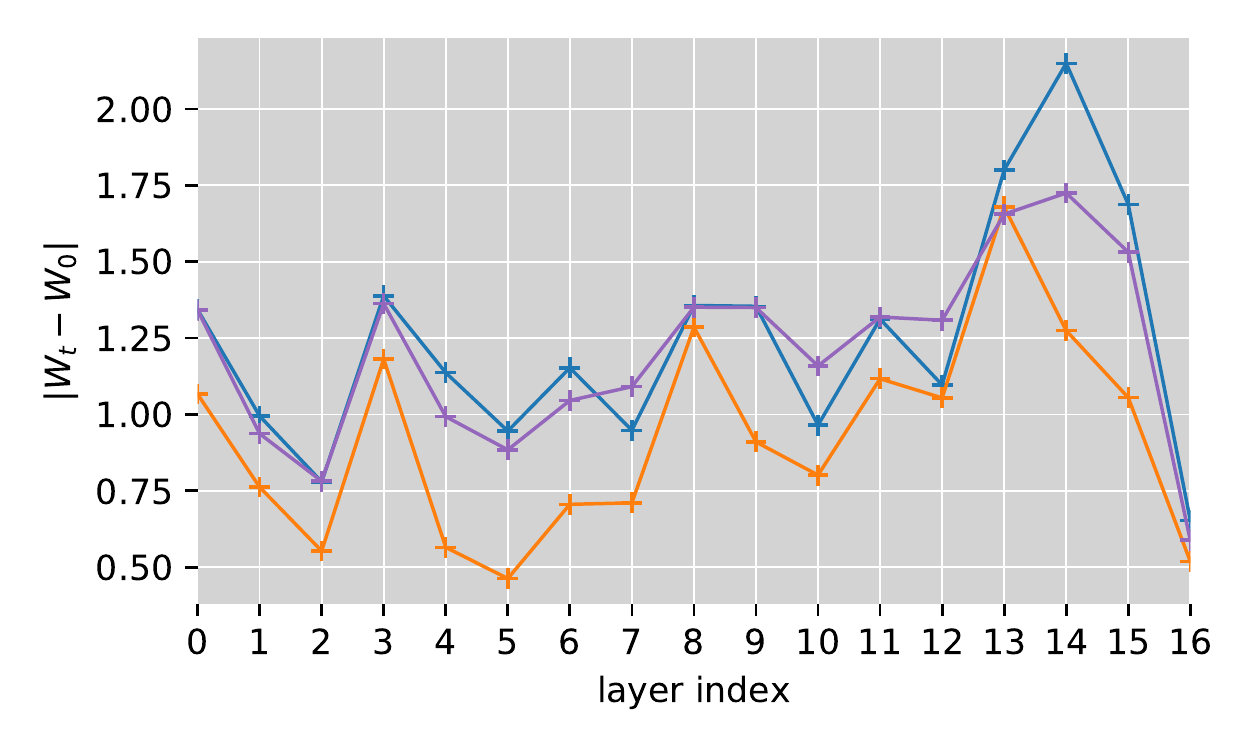}
    \caption{WRNSMALL at 10 epochs}
  \end{subfigure}
  \begin{subfigure}[t]{0.48\textwidth}
    \centering
    \includegraphics[width=\textwidth]{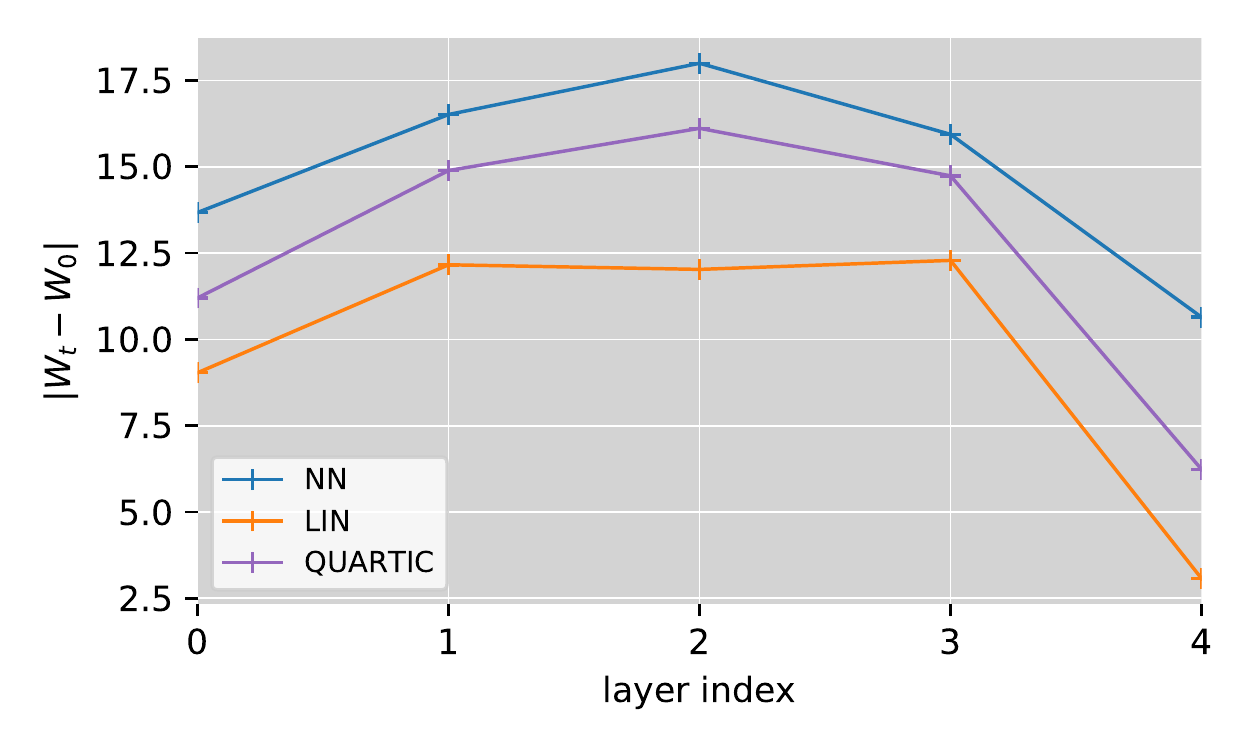}
    \caption{CNNTHIN at convergence}
  \end{subfigure}
  \begin{subfigure}[t]{0.48\textwidth}
    \centering
    \includegraphics[width=\textwidth]{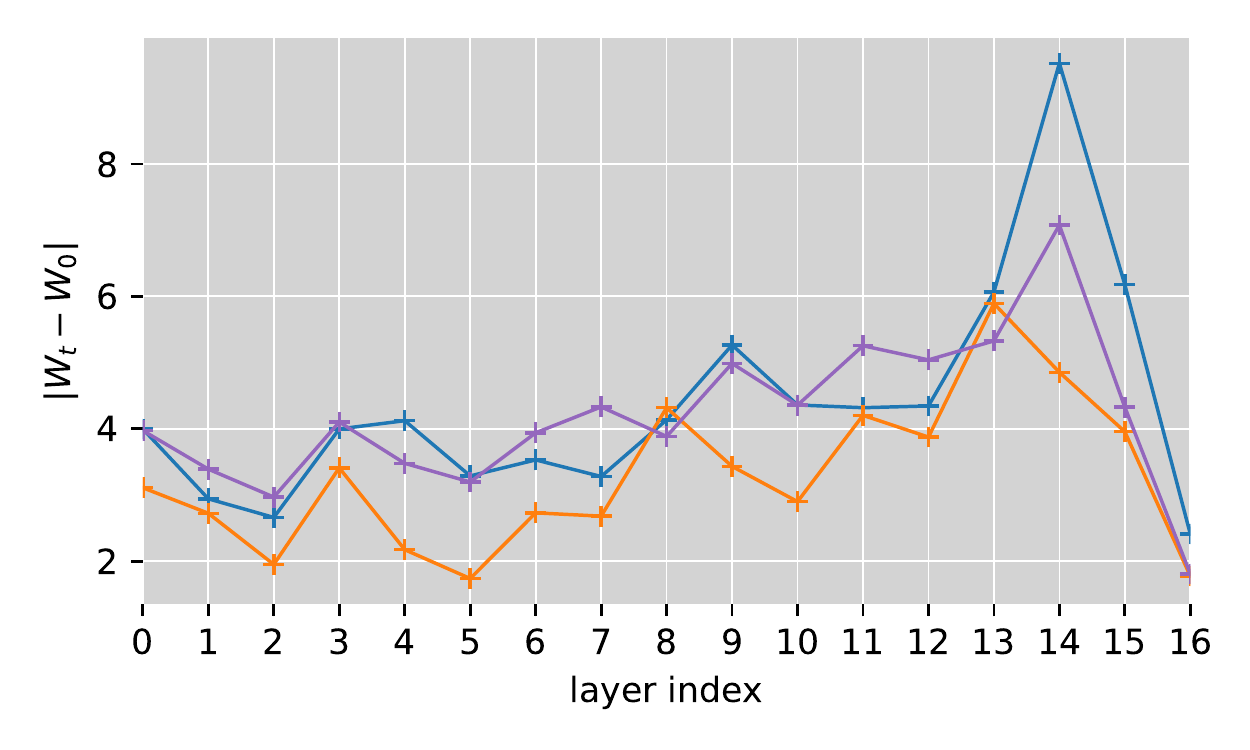}
    \caption{WRNSMALL at convergence}
  \end{subfigure}
  \caption{{\bf Layer movement of {\bf \color{C0} full NN},
    {\bf \color{C1} linearized}, and {\bf \color{C4} quartic}
    models.} Compared with linearized training, quartic (4th order)
    Taylorized training agrees with the full neural network much
    better in terms of layer movement, both at the initial stage and
    at convergence.}
  \label{figure:layer-movement}
\end{figure*}
\section{Theoretical Results}
\label{section:theory}
% Towards understanding Taylorized training, 
We provide a theoretical analysis on the distance between the
trajectories of Taylorized training and full training on wide neural
networks. 

% \yub{Need to change to NTK parameterization. Standard param does not
%   give a well-defined NTK on two-layer nets.}

\paragraph{Problem Setup}
We consider training a wide two-layer neural network with width $m$
and the NTK parameterization\footnote{For wide two-layer networks,
  non-trivial linearized/lazy training can only happen at the NTK
  parameterization; standard parameterization + small learning rate
  would collapse to training a linear function of the input.}:
\begin{equation}
  \label{equation:two-layer-net}
  f_W(x) = \frac{1}{\sqrt{m}} \sum_{r=1}^m a_r\sigma(w_r^\top
  x),
\end{equation}
where $x\in\R^d$ is the input satisfying $\ltwo{x}=1$,
$\set{w_r}_{r\in[m]}\subset\R^d$ are the neurons,
$\set{a_r}\subset \R$ are the top-layer coefficients, and
$\sigma:\R\to\R$ is a smooth activation function.  We set $\set{a_r}$
fixed and only train $\set{w_r}$, so that the learnable parameter of
the problem is the weight matrix
$W=[w_1,\dots,w_m]\in\R^{d\times m}$.\footnote{Setting the top layer
  as fixed is standard in the analysis of two-layer networks in the
  linearized regime, see e.g.~\cite{du2018gradient}.}

We initialize $(a,W)=(a_0, W_0)$ randomly according to the standard
initialization, that is,
\begin{align*}
  a_r \equiv  a_{r,0}\sim \Unif\set{\pm 1}~~~{\rm
  and}~~~w_{r,0}\sim\normal(0, I_d).
\end{align*}
We consider the regression task over a finite dataset $\mc{D}=\set{(x_i,
y_i)\in\R^ d\times\R:i\in[n]}$ with squared loss
\begin{equation*}
  L(W) = \frac{1}{2}\E_{(x,y)\sim\mc{D}}\brac{ (y - f_W(x))^2 },
\end{equation*}
and train $W$ via gradient flow (i.e. continuous time gradient
descent) with ``step-size''\footnote{Gradient flow trajectories are
  invariant to the step-size choice; however, we choose a
  ``step-size'' so as to simplify the presentation.}  $\eta_0$:
\begin{equation}
  \label{equation:full-training-gf}
  \dot{W}_t = -\eta_0\grad L(W_t).
\end{equation}

\paragraph{Taylorized training}
We compare the full training dynamics~\eqref{equation:full-training-gf}
with the corresponding Taylorized training dynamics. The $k$-th order
Taylorized model for the neural
network~\eqref{equation:two-layer-net}, denoted as $f^{(k)}_W$, has the
form
\begin{equation*}
  f^{(k)}_W(x) = \frac{1}{\sqrt{m}}\sum_{r=1}^m a_r \sum_{j=0}^k
  \frac{\sigma^{(j)}(w_{r,0}^\top x)}{j!} \brac{(w_r - w_{r,0})^\top
    x}^j.
\end{equation*}
The Taylorized training dynamics can be described as
\begin{equation}
  \label{equation:taylorized-training-gf}
  \begin{aligned}
    & \dot{W}^{(k)}_t = -\eta_0 \grad L^{(k)}(W^{(k)}_t),~~~{\rm where} \\
    & L^{(k)}(W) = \frac{1}{2}\E_{(x,y)\sim\mc{D}}\brac{ (y - f^{(k)}_W(x))^2 },
  \end{aligned}
\end{equation}
starting at the same initialization $W^{(k)}_0=W_0$.

We now present our main theoretical result which gives bounds on the
agreement between $k$-th order Taylorized training and full training
on wide neural networks.
\begin{theorem}[Agreement between Taylorized and full training:
  informal version]
  \label{theorem:main}
  There exists a suitable step-size $\eta_0$ such that for any fixed
  $t_0>0$ and all suffiicently large $m$,
  % There exists a sufficiently large $m_0$ and a step-size $\eta_0$
  % such that for all $m\ge m_0$,
  with high probability over the random initialization, % the trajectories of
  full training~\eqref{equation:full-training-gf} and 
  Taylorized training~\eqref{equation:taylorized-training-gf} are coupled
  in both the parameter space and the function space:
  \begin{align*}
    & \sup_{t\le t_0} \lfro{W_t - W^{(k)}_t} \le O(m^{-k/2}), \\
    & \sup_{t\le t_0} \abs{f_{W_t}(x) - f^{(k)}_{W^{(k)}_t}(x)} \le
      O(m^{-k/2})~~\textrm{for all}~\ltwo{x}=1.
  \end{align*}
\end{theorem}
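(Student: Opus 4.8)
The plan is to split the theorem into three pieces: (i) \emph{Taylor-remainder estimates} showing that on a small ball around $W_0$ both the $k$-th order model and its gradient are $O(m^{-k/2})$-close to the true network; (ii) a standard NTK-style \emph{containment argument} keeping both gradient flows inside that ball up to the fixed horizon $t_0$; and (iii) a \emph{Gr\"onwall argument} converting the gradient mismatch from (i) into the trajectory bound. The guiding heuristic is the usual lazy-training scaling under the NTK parameterization: each per-neuron gradient $\grad_{w_r}L$ has norm $O(m^{-1/2})$, so after bounded time every $w_r$ has moved only $O(m^{-1/2})$; on such a ball the $(k{+}1)$-st order Taylor remainder of each $\sigma(w_r^\top x)$ is $O(m^{-(k+1)/2})$, and summing $m$ of them against the $m^{-1/2}$ prefactor in $f_W$ produces the stated $O(m^{-k/2})$.

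First I would fix a high-probability event on which $\max_r\ltwo{w_{r,0}}$, $\max_{r,i}\abs{w_{r,0}^\top x_i}$, and $\max_i\abs{f_{W_0}(x_i)}$ are all $O(1)$ (up to polylog factors, and assuming --- as is standard for ``smooth activations'' --- that $\sigma^{(0)},\dots,\sigma^{(k+1)}$ are bounded on bounded sets), so that $L(W_0)=L^{(k)}(W_0)=O(1)$; condition on it throughout. Let $\mc{B}_B\defeq\set{W:\ltwo{w_r-w_{r,0}}\le Bm^{-1/2}~\text{for all}~r}$. On $\mc{B}_B$ every pre-activation $w_r^\top x$ with $\ltwo{x}=1$ lies in a fixed bounded interval, so by the Lagrange form of Taylor's theorem the per-neuron remainder $R_{r,k}(W,x)\defeq\sigma(w_r^\top x)-\sum_{j=0}^k\frac{\sigma^{(j)}(w_{r,0}^\top x)}{j!}[(w_r-w_{r,0})^\top x]^j$ satisfies $\abs{R_{r,k}(W,x)}\le C\ltwo{w_r-w_{r,0}}^{k+1}$, with an analogous $(k{-}1)$-st order bound for $\sigma'$; multiplying by $a_r m^{-1/2}$ and summing over $r\in[m]$ gives $\sup_{W\in\mc{B}_B}\sup_{\ltwo{x}=1}\abs{f_W(x)-f^{(k)}_W(x)}=O(m^{-k/2})$ and $\sup_{W\in\mc{B}_B}\sup_{\ltwo{x}=1}\lfro{\grad f_W(x)-\grad f^{(k)}_W(x)}=O(m^{-k/2})$. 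Since also $\lfro{\grad f^{(k)}_W(x)}=O(1)$ and both $\abs{y_i-f_W(x_i)}$ and $\abs{y_i-f^{(k)}_W(x_i)}$ are $O(1)$ on $\mc{B}_B$ (outputs move only $O(1)$ there), the product rule yields $\sup_{W\in\mc{B}_B}\lfro{\grad L(W)-\grad L^{(k)}(W)}=O(m^{-k/2})$; the same kind of computation shows $\grad L^{(k)}$ is $\beta$-Lipschitz on $\mc{B}_B$ with $\beta=O(1)$.

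Next I would run the containment step: along $\dot W_t=-\eta_0\grad L(W_t)$, as long as $W_t\in\mc{B}_B$ one has $\ltwo{\grad_{w_r}L(W_t)}=O(m^{-1/2})$, hence $\ltwo{w_r(t)-w_{r,0}}\le\eta_0\int_0^t\ltwo{\grad_{w_r}L(W_s)}\,ds=O(\eta_0 t_0 m^{-1/2})$; enlarging $B$ beyond the implied constant makes this strictly below $Bm^{-1/2}$, so a continuity (bootstrap) argument forces $W_t\in\mc{B}_B$ for all $t\le t_0$, and the identical argument applies to $W^{(k)}_t$. This is where a ``suitable'' $\eta_0$ is used: for gradient flow any fixed $\eta_0$ works once $B$ is chosen accordingly (it only reparameterizes time), while in a discrete-time version one would additionally need $\eta_0$ small. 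With both trajectories confined to $\mc{B}_B$, put $g(t)\defeq\lfro{W_t-W^{(k)}_t}$; then $\dot g(t)\le\eta_0\lfro{\grad L(W_t)-\grad L^{(k)}(W_t)}+\eta_0\lfro{\grad L^{(k)}(W_t)-\grad L^{(k)}(W^{(k)}_t)}\le\eta_0\paren{O(m^{-k/2})+\beta g(t)}$, and since $g(0)=0$, Gr\"onwall's inequality gives $\sup_{t\le t_0}g(t)\le\beta^{-1}O(m^{-k/2})(e^{\eta_0\beta t_0}-1)=O(m^{-k/2})$, which is the parameter-space bound. The function-space bound follows from the triangle inequality, $\abs{f_{W_t}(x)-f^{(k)}_{W^{(k)}_t}(x)}\le\abs{f_{W_t}(x)-f^{(k)}_{W_t}(x)}+\abs{f^{(k)}_{W_t}(x)-f^{(k)}_{W^{(k)}_t}(x)}\le O(m^{-k/2})+O(1)\cdot g(t)=O(m^{-k/2})$, uniformly over $\ltwo{x}=1$ since every estimate above was uniform in $x$.

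The step I expect to be the main obstacle is making the containment fully rigorous. Unlike the $k=1$ case, $L^{(k)}$ for $k\ge 2$ is a non-convex polynomial that is neither globally Lipschitz nor globally smooth, so no off-the-shelf descent lemma applies; the bootstrap must simultaneously keep the trajectory in $\mc{B}_B$, keep the residuals (hence the gradient magnitudes) controlled there, and do so for the coupled pair of flows, while all the remainder and smoothness estimates must hold \emph{uniformly} over the uncountable set of unit test inputs --- which is exactly what the deterministic control $\abs{w_{r,0}^\top x}\le\ltwo{w_{r,0}}$ buys. The remaining effort is bookkeeping: verifying that every hidden constant depends only on $k$, $d$, $n$, $t_0$, $\eta_0$ and on bounds for $\sigma^{(0)},\dots,\sigma^{(k+1)}$ on a fixed compact set --- not on $m$ --- after which letting $m$ be large delivers both displayed bounds. (If only polynomial growth of $\sigma$'s derivatives is assumed, an extra $\mathrm{polylog}(m)$ factor appears, which can be absorbed or removed by exploiting the random signs $a_r$ in the remainder sum.)
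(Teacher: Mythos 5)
Your proposal is correct in outline, but it takes a genuinely different route from the paper. The paper imports the machinery of Lee et al.\ (its Lemmas~\ref{lemma:basic-full-training} and~\ref{lemma:basic-taylorized-training}), which uses Assumption~\ref{assumption:full-rank-ntk} (positive-definite NTK) to get exponential decay of the residuals $g_t, g^{(k)}_t$ and hence $O(m^{-1/2})$ weight movement for \emph{all} $t$; it then bounds only the Jacobian mismatch $\lfro{J-J^{(k)}}\le O(m^{-k/2})$ (Lemma~\ref{lemma:individual-weight-movement}) and runs a \emph{coupled} Gr\"onwall argument on $A(t)=\lfro{W_t-W^{(k)}_t}+\ltwo{g_t-g^{(k)}_t}$, so the function-space (training-residual) coupling is obtained dynamically rather than from a Taylor remainder, with a separate fourth step repeating the argument pointwise for test inputs. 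You instead run a fixed-horizon lazy-training bootstrap that never invokes the NTK full-rankness or residual decay --- bounded residuals on the ball $\mc{B}_B$ suffice for $t\le t_0$ --- you Gr\"onwall only the parameter difference using the gradient mismatch $\lfro{\grad L-\grad L^{(k)}}=O(m^{-k/2})$ on $\mc{B}_B$ (which, unlike the paper's argument, requires the \emph{function-value} Taylor remainder $|f_W-f^{(k)}_W|=O(m^{-k/2})$ in addition to the Jacobian mismatch; both follow from the Lipschitz-$\sigma^{(k)}$ assumption, so this is fine), and you then get the function-space and test-point bounds essentially for free via the triangle inequality and the deterministic control $|w_{r,0}^\top x|\le\ltwo{w_{r,0}}$. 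What each buys: your argument is more elementary, drops Assumption~\ref{assumption:full-rank-ntk} for the fixed-horizon statement, and makes the uniformity over unit test inputs immediate; the paper's argument reuses the global-in-time residual-decay estimates of the linearized-training literature (useful if one hopes to extend the coupling beyond a fixed $t_0$, as its remark discusses), though its final Gr\"onwall step still blows up exponentially in $t_0$ just like yours. The only caveats in your write-up are minor: the sentence deriving the gradient mismatch should explicitly invoke the already-stated bound $|f_W(x_i)-f^{(k)}_W(x_i)|=O(m^{-k/2})$ for the cross term (the ingredients are all present), and the polylog growth of $\max_r\ltwo{w_{r,0}}$ affecting the intermediate-derivative coefficients should be tracked, as you flag --- neither is a gap in the approach.
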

% \paragraph{Proof sketch and implications}
% \yub{check whether results holds for all $x$.}
% The proof of Theorem~\ref{theorem:main} builds on~\yub{TBA.} 

Theorem~\ref{theorem:main} extends existing results which state that
linearized training approximates full training with an error bound
$O(1/\sqrt{m})$ in the function
space~\cite{lee2019wide,chizat2019lazy}, showing that higher-order
Taylorized training enjoys a stronger approximation bound
$O(m^{-k/2})$. Such a bound corroborates our experimental finding
that Taylorized training are increasingly better approximations of
full training as we increase $k$. We defer
the formal statement of Theorem~\ref{theorem:main} and its 
proof to Appendix~\ref{appendix:proof-main}.

We emphasize that Theorem~\ref{theorem:main} is still only mostly
relevant for explaining the initial stage rather than the entire
trajectory for full training in practice, due to the fact that the
result holds for gradient flow which only simulates gradient descent
with an infinitesimally small learning rate.  We believe it is an
interesting open direction how to prove the coupling between neural
networks and the $k$-th order Taylorized training under large learning
rates.

% {\bf Remark}~\yub{Extends existing result over linearized training
%   (cite). Error bounds decays as $m^{-k}$ can be interpreted as works
%   better at finite width.}~\yub{Prove coupling out-of the NTK regime
%   is left as future work.}
\section{Related Work}
\label{section:related}
Here we review some additional related work.

\paragraph{Neural networks, linearized training, and kernels}
The connection between wide neural networks and kernel methods has
first been identified in~\cite{neal1996priors}.
A fast-growing body of recent work has studied the interplay between
wide neural networks, linearized models, and their infinite-width
limits governed by either the Gaussian Process (GP) kernel (corresponding
to training the top linear layer
only)~\cite{daniely2017sgd,lee2018deep,matthews2018gaussian} or the 
Neural Tangent Kernel (corresponding to training all the
layers)~\cite{jacot2018neural}. By exploiting such an interplay, it
has been shown that gradient descent on overparameterized neural nets
can reach global
minima~\cite{jacot2018neural,du2018gradient,du2019gradient,allen2019convergence,zou2019gradient,lee2019wide},  
and generalize as well as a kernel
method~\cite{li2018learning,arora2019fine,cao2019generalization}.

\paragraph{NTK-based and NTK-inspired learning algorithms}
Inspired by the connection between neural nets and kernel methods,
algorithms for computing the \emph{exact} (limiting) GP / NTK kernels
efficiently has been
proposed~\cite{arora2019exact,lee2019wide,novak2020neural,yang2019tensor}
and shown to yield state-of-the-art kernel-based algorithms on
benchmark learning
tasks~\cite{arora2019exact,li2019enhanced,arora2019harnessing}. The
connection between neural nets and kernels have further been used in
designing algorithms for general machine learning use cases such as
multi-task learning~\cite{mu2020gradients} and protecting against
noisy labels~\cite{hu2020simple}.

\paragraph{Limitations of linearized training}
The performance gap between linearized and fully trained networks has
been empirically observed
in~\cite{arora2019exact,lee2019wide,chizat2019lazy}. On the
theoretical end, the sample complexity gap between linearized training
and full training has been shown
in~\cite{wei2019regularization,ghorbani2019limitations,allen2019can,yehudai2019power}
under specific data distributions and architectures.

\paragraph{Provable training beyond linearization}
\citet{allen2019learning,bai2020beyond} show that wide neural nets can
couple with quadratic models with provably nice optimization
landscapes and better generalization than the
NTKs, and~\citet{bai2020beyond} furthers show the sample complexity
benefit of $k$-th order models for all $k$.
~\citet{li2019towards} show that a large initial learning rate +
learning rate decay generalizes better than a small learning rate for
learning a two-layer network on a specific toy data distribution.

An parallel line of work studies over-parameterized neural net
training in the mean-field limit, in which the training dynamics can
be characterized as a PDE over the distribution of
weights~\cite{mei2018mean,chizat2018global,rotskoff2018neural,sirignano2018mean}. Unlike
the NTK regime, the mean-field regime moves weights significantly,
though the inductive bias (what function it converges to) and
generalization power there is less clear. 
\section{Conclusion}
In this paper, we introduced and studied Taylorized training. We
demonstrated experimentally the potential of Taylorized training in
understanding full neural network training, by showing its advantage in
terms of approximation in both weight and function space, training and
test performance, and other empirical properties such as layer
movements. We also provided a preliminary theoretical analysis on the
approximation power of Taylorized training.

We believe Taylorized training can serve as a useful tool towards
studying the theory of deep learning and open many interesting
future directions. For example, can we prove the coupling between full
and Taylorized training with large learning rates? How well does
Taylorized training approximate full training as $k$ approaches
infinity? Following up on our layer movement experiments, it would
also be interesting use Taylorized training to study the properties of
neural network architectures or initializations.

\bibliography{bib}
\bibliographystyle{icml2020}

%%%%%%%%%%%%%%%%%%%%%%%%%%%%%%%%%%%%%%%%%%%%%%%%%%%%%%%%%%%%%%%%%%%%%%%%%%%%%%%
%%%%%%%%%%%%%%%%%%%%%%%%%%%%%%%%%%%%%%%%%%%%%%%%%%%%%%%%%%%%%%%%%%%%%%%%%%%%%%%
% DELETE THIS PART. DO NOT PLACE CONTENT AFTER THE REFERENCES!
%%%%%%%%%%%%%%%%%%%%%%%%%%%%%%%%%%%%%%%%%%%%%%%%%%%%%%%%%%%%%%%%%%%%%%%%%%%%%%%
%%%%%%%%%%%%%%%%%%%%%%%%%%%%%%%%%%%%%%%%%%%%%%%%%%%%%%%%%%%%%%%%%%%%%%%%%%%%%%%
\appendix
\onecolumn
\section{Proof of Theorem~\ref{theorem:main}}
\label{appendix:proof-main}

% \yub{Current proof written in standard param. Need to change to NTK
%   param. Lemma 3 requires more care.}
\subsection{Formal statement of Theorem~\ref{theorem:main}}
We first collect notation and state our assumptions. Recall that our
two-layer neural network is defined as
\begin{equation*}
  f_W(x) = \frac{1}{\sqrt{m}}\sum_{r=1}^m a_r\sigma(w_r^\top x),
\end{equation*}
and its $k$-th order Taylorized model is
\begin{equation}
  \label{equation:two-layer-taylorized}
  f^{(k)}_W(x) = \frac{1}{\sqrt{m}}\sum_{r=1}^m a_r \sum_{j=0}^k
  \frac{\sigma^{(j)}(w_{r,0}^\top x)}{j!} \brac{(w_r - w_{r,0})^\top x}^j.
\end{equation}
Let
$\mc{X}\in\R^{d\times n}$ and $\mc{Y}\in\R^n$ denote inputs and labels
of the training dataset. For any weight matrix $W\in\R^{d\times m}$ we
let 
\begin{align*}
  & f(W) = f(W;\mc{X}) \defeq \begin{bmatrix}
    f_W(x_1) \\
    \vdots \\
    f_W(x_n)
  \end{bmatrix} \in \R^n, ~~~
  & f^{(k)}(W) = f^{(k)}(W; \mc{X}) \defeq \begin{bmatrix}
    f^{(k)}_W(x_1) \\
    \vdots \\
    f^{(k)}_W(x_n)
  \end{bmatrix} \in \R^n, \\
  & g(W) = f(W) - \mc{Y} \in \R^n, ~~~
  & g^{(k)}(W) = f^{(k)}(W) - \mc{Y} \in \R^n,
  \\
  & f_t \defeq f(W_t),~~~&f^{(k)}_t \defeq f^{(k)}(W^{(k)}_t), \\
  & g_t \defeq g(W_t),~~~&g^{(k)}_t \defeq g^{(k)}(W^{(k)}_t), \\
  & J(W) \defeq \begin{bmatrix}
    \grad f_W(x_1) \\
    \vdots \\
    \grad f_W(x_n)
  \end{bmatrix} \in \R^{n\times dm},~~~
  & J^{(k)}(W) \defeq \begin{bmatrix}
    \grad f^{(k)}_W(x_1) \\
    \vdots \\
    \grad f^{(k)}_W(x_n)
  \end{bmatrix} \in \R^{n\times dm}.
\end{align*}
With this notation, the loss functions can be written as
$L(W)=\norm{g(W)}^2$ and $L^{(k)}(W)=\norm{g^{(k)}(W)}^2$, and the
training dynamics (full and Taylorized) can be written as
\begin{align*}
  & \dot{W}_t = -\eta_0\cdot J(W_t)^\top g_t,\qquad \dot{g}_t =
    -\eta_0 \cdot J(W_t)J(W_t)^\top g_t, \\
  & \dot{W}^{(k)}_t = -\eta_0\cdot J^{(k)}(W^{(k)}_t)^\top
    g^{(k)}_t,\qquad \dot{g}^{(k)}_t = 
    -\eta_0 \cdot J^{(k)}(W^{(k)}_t)J^{(k)}(W^{(k)}_t)^\top
    g^{(k)}_t. 
\end{align*}

We now state our assumptions.
\begin{assumption}[Full-rankness of analytic NTK]
  \label{assumption:full-rank-ntk}
  The analytic NTK $\Theta\in\R^{n\times n}$ on the training dataset,
  defined as
  \begin{equation*}
    \Theta \defeq \lim_{m\to\infty} J(W_0)J(W_0)^\top~~~\textrm{in
      probability},
  \end{equation*}
  is full rank and satisfies $\lambda_{\min}(\Theta)\ge \lambda_{\min}$
  for some $\lambda_{\min}>0$.
\end{assumption}

\begin{assumption}[Smooth activation]
  \label{assumption:smooth-activation}
  The activation function $\sigma:\R\to\R$ is $C^k$ and has a bounded
  Lipschitz derivative: there exists a constant $C>0$ such that
  \begin{align*}
    \sup_{t\in\R} \abs{\sigma'(t)}\le C~~~{\rm and}~~~\sup_{t\neq
    t'\in\R} \abs{\frac{\sigma'(t) - \sigma'(t')}{t-t'}} \le C.
  \end{align*}
  Further, $\sigma'$ has a Lipschitz $k$-th derivative:
  $t\mapsto\sigma^{(k)}(t)$ is $L$-Lipschitz for some constant $L>0$.
\end{assumption}
% Note that this assumption implies that $\sigma$, along with its
% $\set{1,\dots,k}$-th order derivative, are uniformly bounded on any
% bounded interval $[-M, M]$. (This can be shown by Taylor expanding
% $\sigma$ around $0$ to the $k$-th order and applying the $k$-th order
% smoothness.)

%\yub{more assumptions.}

Throughout the rest of this section, we assume the above assumptions
hold. We are now in position to formally state our main theorem.
\begin{theorem}[Approximation error of Taylorized training; formal
  version of Theorem~\ref{theorem:main}]
  \label{theorem:main-formal}
  There exists a suitable step-size choice $\eta_0>0$ such that the
  following is true: for any fixed $t_0>0$ and all sufficiently large
  $m$,
  with high probability over the random initialization, % the trajectories of
  full training~\eqref{equation:full-training-gf} and 
  Taylorized training~\eqref{equation:taylorized-training-gf} are coupled
  in both the parameter space and the function space:
  \begin{align*}
    & \sup_{t\le t_0} \lfro{W_t - W^{(k)}_t} \le O(m^{-k/2}), \\
    & \sup_{t\le t_0} \abs{f_{W_t}(x) - f^{(k)}_{W^{(k)}_t}(x)} \le
      O(m^{-k/2})~~\textrm{for all}~\ltwo{x}=1.
  \end{align*}
\end{theorem}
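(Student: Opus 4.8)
The plan is to combine the classical lazy-training analysis of wide networks with a Gr\"onwall coupling of the two gradient flows. The two enabling facts are that $f^{(k)}$ is a high-accuracy surrogate of $f$ near $W_0$ and that it is \emph{exact to first order at $W_0$}: every term with $j\ge 2$ in~\eqref{equation:two-layer-taylorized} carries a factor $(w_r-w_{r,0})^\top x$ vanishing at $W_0$, so
\begin{equation*}
  f^{(k)}_{W_0}=f_{W_0},\qquad \grad_W f^{(k)}_{W_0}=\grad_W f_{W_0},
\end{equation*}
i.e. $g^{(k)}_0=g_0$ and $J^{(k)}(W_0)=J(W_0)$. Moreover, on the lazy ball $\mc{B}\defeq\big\{W\in\R^{d\times m}:\ltwo{w_r-w_{r,0}}\le R_0\,m^{-1/2}\ \textrm{for all}\ r\in[m]\big\}$, writing the $(k+1)$-st Taylor remainder of $\sigma$ in integral form and using that $\sigma^{(k)}$ is Lipschitz (Assumption~\ref{assumption:smooth-activation}), each neuron contributes $O(\ltwo{w_r-w_{r,0}}^{k+1})=O(m^{-(k+1)/2})$ to the pointwise gap between $f_W$ and $f^{(k)}_W$, and (differentiating once in $w_r$) $O(m^{-(k+1)/2})$ to each $w_r$-block of $J(W)-J^{(k)}(W)$. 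Summing the $m$ neuron contributions against the $m^{-1/2}$ normalization in~\eqref{equation:two-layer-net} gives, uniformly over $W\in\mc{B}$ and $\ltwo{x}=1$,
\begin{equation}
  \label{eq:taylor-remainders}
  \abs{f_W(x)-f^{(k)}_W(x)}\le O(m^{-k/2}),\qquad \norm{f(W)-f^{(k)}(W)}\le O(m^{-k/2}),\qquad \norm{J(W)-J^{(k)}(W)}_{\mathrm{op}}\le O(m^{-k/2}).
\end{equation}
The same smoothness makes $J$ and $J^{(k)}$ both $O(1)$ in operator norm and $O(1)$-Lipschitz (from the Frobenius ball to operator norm) on $\mc{B}$.

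Next I would show that, with high probability over the initialization, \emph{both} trajectories stay in $\mc{B}$ for all $t\le t_0$ and have exponentially decaying loss. For full training this is the standard NTK argument~\cite{du2018gradient,lee2019wide}: at init $\ltwo{g_0}=O(\sqrt n)$ and $J(W_0)J(W_0)^\top$ concentrates around the full-rank $\Theta$ (Assumption~\ref{assumption:full-rank-ntk}), so for a suitable constant step-size $\eta_0$ the (Frobenius-small) Lipschitzness of $J$ keeps $\lambda_{\min}(J(W_t)J(W_t)^\top)\ge\lambda_{\min}/2$ while $W_t\in\mc{B}$, whence $\frac{d}{dt}\ltwo{g_t}^2\le-\eta_0\lambda_{\min}\ltwo{g_t}^2$ and, integrating the per-neuron gradient (norm $O(m^{-1/2}\ltwo{g_t})$), $\sup_t\ltwo{w_{r,t}-w_{r,0}}=O(m^{-1/2})$ — so $W_t$ never leaves $\mc{B}$. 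Because $f^{(k)}$ has the same value and Jacobian at $W_0$ and a locally Lipschitz Jacobian, running the identical argument with $J^{(k)}$ in place of $J$ shows $W^{(k)}_t\in\mc{B}$ and that $\ltwo{g^{(k)}_t}$ decays exponentially too; in particular $\ltwo{g_t},\ltwo{g^{(k)}_t}=O(1)$ and $\norm{J^{(k)}(W^{(k)}_t)}_{\mathrm{op}}=O(1)$ throughout $[0,t_0]$.

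With both flows confined to $\mc{B}$ I would couple them via Gr\"onwall. Set $\Delta_t\defeq W_t-W^{(k)}_t$, so $\Delta_0=0$ and
\begin{equation*}
  \dot{\Delta}_t=-\eta_0\big[(J(W_t)-J^{(k)}(W_t))^\top g_t+(J^{(k)}(W_t)-J^{(k)}(W^{(k)}_t))^\top g_t+J^{(k)}(W^{(k)}_t)^\top(g_t-g^{(k)}_t)\big].
\end{equation*}
The first term is $O(m^{-k/2})$ by~\eqref{eq:taylor-remainders} and $\ltwo{g_t}=O(1)$; the second is $O(\lfro{\Delta_t})$ by Lipschitzness of $J^{(k)}$; and splitting $g_t-g^{(k)}_t=(f(W_t)-f^{(k)}(W_t))+(f^{(k)}(W_t)-f^{(k)}(W^{(k)}_t))$ bounds the third by $O(m^{-k/2})+O(\lfro{\Delta_t})$. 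Hence $\frac{d}{dt}\lfro{\Delta_t}\le\lfro{\dot{\Delta}_t}\le C_1m^{-k/2}+C_2\lfro{\Delta_t}$, and Gr\"onwall with $\Delta_0=0$ gives $\sup_{t\le t_0}\lfro{W_t-W^{(k)}_t}\le (C_1/C_2)(e^{C_2t_0}-1)\,m^{-k/2}=O(m^{-k/2})$ (keeping the exponential decay of $\ltwo{g_t},\ltwo{g^{(k)}_t}$ in the source terms even removes the $e^{C_2t_0}$ factor). The function-space bound then follows for every $\ltwo{x}=1$ from
\begin{equation*}
  \abs{f_{W_t}(x)-f^{(k)}_{W^{(k)}_t}(x)}\le\abs{f_{W_t}(x)-f^{(k)}_{W_t}(x)}+\abs{f^{(k)}_{W_t}(x)-f^{(k)}_{W^{(k)}_t}(x)}\le O(m^{-k/2})+\sup_{W\in\mc{B}}\lfro{\grad_W f^{(k)}_W(x)}\cdot\lfro{\Delta_t}=O(m^{-k/2}),
\end{equation*}
using~\eqref{eq:taylor-remainders} for the first summand and $\lfro{\grad_W f^{(k)}_W(x)}=O(1)$ on $\mc{B}$ for the second; everything is uniform over the unit sphere since all estimates used only $\ltwo{x}=1$.

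I expect the main obstacle to be the second step: transplanting the lazy-training analysis to the \emph{polynomial} models $f^{(k)}$ — in particular showing that $J^{(k)}$ stays close to the analytic NTK along the whole trajectory (which leans on $J^{(k)}(W_0)=J(W_0)$ together with local Lipschitzness of $J^{(k)}$, hence on the $k$-th order smoothness of $\sigma$ and on concentration of the $\sigma^{(j)}(w_{r,0}^\top x)$), and that the Taylorized flow never escapes $\mc{B}$. Relatedly, the second and third steps are mutually dependent, so the argument is most safely organized as a continuity/bootstrap: let $\tau$ be the first time either trajectory leaves a slightly enlarged ball; on $[0,\tau]$ all of the above holds and in turn forces both trajectories to stay well inside $\mc{B}$, so $\tau>t_0$. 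The remainder is bookkeeping of the $m$-dependence — a per-neuron Taylor remainder of order $m^{-(k+1)/2}$, times $m$ neurons, times the $m^{-1/2}$ normalization of~\eqref{equation:two-layer-net}, landing exactly at $m^{-k/2}$.
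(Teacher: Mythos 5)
Your proposal is correct and follows essentially the same route as the paper's proof: you control both flows in the lazy regime (per-neuron movements of order $m^{-1/2}$ and exponentially decaying residuals, leaning on the fact that $f^{(k)}$ shares the value, Jacobian, and hence the NTK of $f$ at $W_0$ and has a locally bounded, Lipschitz Jacobian), deduce the $O(m^{-k/2})$ Jacobian and function-value gaps along the trajectories from the Lipschitzness of $\sigma^{(k)}$, and then couple the two flows through the identical three-term decomposition plus Gr\"onwall; your direct triangle-inequality treatment of $g_t-g^{(k)}_t$ (and of the test-point gap) is a harmless streamlining of the paper's separate differential inequality for the function-space difference and its Step~4 evolution argument. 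The one caveat is your parenthetical claim that the factor $e^{C_2 t_0}$ can be removed using the decay of the residuals: the coefficient of the parameter gap arising from the term $J^{(k)}(W^{(k)}_t)^\top\big(g_t-g^{(k)}_t\big)$ does not decay in time, and the paper explicitly remarks that its bound holds only on a fixed horizon $[0,t_0]$ precisely because the sharper Gr\"onwall argument available for linearized training (where the kernel is constant) does not carry over here.
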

\paragraph{Remark on extending to entire trajectory} Compared with the
existing result on linearized training~\citep[][Theorem
H.1]{lee2019wide}, our Theorem~\ref{theorem:main-formal} only
shows the approximation for a fixed time horizon $t\in[0, t_0]$
instead of the entire trajectory $t\in[0, \infty)$. Technically, this
is due to that the linearized result uses a more careful Gronwall type
argument which relies on the fact that the kernel does not change,
which ceases to hold here. It would be a compelling question if we
could show the approximation result for higher-order Taylorized
training for the entire trajectory.

\subsection{Proof of Theorem~\ref{theorem:main-formal}}
Throughout the proof, we let $K$ be a constant that does not depend
on $m$, but can depend on other problem parameters and can vary from
line to line. We will also denote
\begin{equation}
  \label{equation:sigma-k}
  \sigma^{(k)}_{r,i}(t) \defeq \sum_{j=0}^k
  \frac{\sigma^{(j)}(w_{r,0}^\top x_i)}{j!} \brac{t - w_{r,0}^\top
  x_i}^j,
\end{equation}
so that the Taylorized model can be essentially thought of as a
two-layer neural network with the (data- and neuron-dependent)
activation functions $\sigma^{(k)}_{r,i}$.

We first present some known results about the full training trajectory
$W_t$, adapted from~\citep[][Appendix
G]{lee2019wide}.
\begin{lemma}[Basic properties of full training]
  \label{lemma:basic-full-training}
  Under
  Assumptions~\ref{assumption:full-rank-ntk},~\ref{assumption:smooth-activation},
  the followings hold:
  \begin{enumerate}[(a)]
  \item $J$ is locally bounded and Lipschitz:
    For any absolute constant $C>0$ there exists a constant $K>0$ such
    that for sufficiently large $m$, with high probability (over the
    random initialization $W_0$) we have
    \begin{align*}
      & \lfro{J(W) - J(\wt{W})} \le K\lfro{W -
        \wt{W}}~~~{\rm and} \\
      & \lfro{J(W)} \le K
    \end{align*}
    for any $W,\wt{W}\in \ball(W_0, C)$, where $\ball(W_0, R)\defeq
    \set{W\in\R^{d\times m}:\lfro{W-W_0}\le R}$ denotes a Frobenius norm
    ball.
  \item Boundedness of gradient flow: there exists an absolute $R_0>0$
    such that with high probability for sufficiently large $m$ and a
    suitable step-size choice $\eta_0$ (independent of $m$), we have
    for all $t>0$ that
    \begin{align*}
      & \ltwo{g_t} \le \exp(-\eta_0\lambda_{\min}t/3)R_0, \\
      & \lfro{W_t - W_0} \le \frac{KR_0}{\lambda_{\min}}\paren{1 -
        \exp(-\eta_0\lambda_{\min}t/3)}m^{-1/2}, \\
      & \sup_{t\ge 0} \lfro{\what{\Theta}_0 - \what{\Theta}_t} \le
        \frac{K^3R_0}{\lambda_{\min}}m^{-1/2}.
    \end{align*}
  \end{enumerate}
\end{lemma}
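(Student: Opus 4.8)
The plan is to treat the two parts by different means: part~(a) is a direct computation that exploits the explicit form of the two-layer Jacobian, while part~(b) is the standard NTK-type gradient-flow argument, executed through a continuity (bootstrap) argument that keeps the time-varying empirical kernel $\what{\Theta}_t\defeq J(W_t)J(W_t)^\top$ well-conditioned along the entire trajectory.

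For part~(a), I would first write out the Jacobian explicitly: from \eqref{equation:two-layer-net}, $\grad_{w_r}f_W(x_i)=\frac{1}{\sqrt m}a_r\sigma'(w_r^\top x_i)x_i$, so row $i$ of $J(W)$ is the stacked vector $\frac{1}{\sqrt m}\paren{a_r\sigma'(w_r^\top x_i)x_i}_{r=1}^m$. Since $a_r^2=1$, $\ltwo{x_i}=1$, and $\abs{\sigma'}\le C$ by Assumption~\ref{assumption:smooth-activation}, each row has squared norm $\frac1m\sum_r\sigma'(w_r^\top x_i)^2\le C^2$, giving $\lfro{J(W)}\le C\sqrt n$. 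For Lipschitzness I would subtract the two Jacobians row by row and neuron by neuron and use that $\sigma'$ is $C$-Lipschitz, so $\abs{\sigma'(w_r^\top x_i)-\sigma'(\wt w_r^\top x_i)}\le C\ltwo{w_r-\wt w_r}$; summing the $1/m$-weighted squares over $r$ and $i$ yields $\lfro{J(W)-J(\wt W)}\le \frac{C\sqrt n}{\sqrt m}\lfro{W-\wt W}$. Both estimates are in fact deterministic and global in this two-layer, bounded-derivative setting (no ball and no randomness are needed), so the ``high probability'' qualifier is inherited only from the general multi-layer case. I would emphasize that the Lipschitz constant carries a $1/\sqrt m$ factor --- this is the quantitative source of the kernel stability in part~(b).

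For part~(b), I would first harvest two high-probability facts at initialization. Assumption~\ref{assumption:full-rank-ntk} and the in-probability convergence $\what{\Theta}_0\to\Theta$ give $\lambda_{\min}(\what{\Theta}_0)\ge \frac34\lambda_{\min}$ for large $m$; and since $\sigma$ is $C$-Lipschitz, each $f_{W_0}(x_i)$ is a $1/\sqrt m$-scaled sum of independent, mean-zero (over the random signs $a_r$), bounded-variance terms, so a concentration bound gives $\ltwo{g_0}\le R_0$ for an absolute $R_0$. The core is a bootstrap over the stopping time $\tau\defeq\inf\set{t:\lambda_{\min}(\what{\Theta}_t)<\lambda_{\min}/3}$, which is positive by continuity. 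For $t<\tau$ the residual obeys $\frac{d}{dt}\ltwo{g_t}^2=-2\eta_0\,g_t^\top\what{\Theta}_t g_t\le -\frac{2\eta_0\lambda_{\min}}{3}\ltwo{g_t}^2$, hence $\ltwo{g_t}\le\exp(-\eta_0\lambda_{\min}t/3)R_0$; integrating the per-neuron flow $\dot w_r=-\eta_0\frac{a_r}{\sqrt m}\sum_i g_{t,i}\sigma'(w_r^\top x_i)x_i$, whose norm is at most $\eta_0\frac{C\sqrt n}{\sqrt m}\ltwo{g_t}$, bounds each neuron's movement by $\frac{KR_0}{\lambda_{\min}}\paren{1-\exp(-\eta_0\lambda_{\min}t/3)}m^{-1/2}$ with $K=3C\sqrt n$, which is exactly the displayed movement bound. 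Aggregating over the $m$ neurons gives $\lfro{W_t-W_0}\le \frac{KR_0}{\lambda_{\min}}$, so the part-(a) Lipschitz estimate yields $\lfro{J_t-J_0}\le\frac{C\sqrt n}{\sqrt m}\lfro{W_t-W_0}=O(m^{-1/2})$; combined with $\what{\Theta}_t-\what{\Theta}_0=J_t(J_t-J_0)^\top+(J_t-J_0)J_0^\top$ and $\norm{J}_{\rm op}\le\lfro{J}\le K$ this gives $\lfro{\what{\Theta}_t-\what{\Theta}_0}\le\frac{K^3R_0}{\lambda_{\min}}m^{-1/2}$. Choosing $m$ large enough that this is below $\lambda_{\min}/12$ forces $\lambda_{\min}(\what{\Theta}_t)\ge\frac34\lambda_{\min}-\frac{1}{12}\lambda_{\min}>\frac13\lambda_{\min}$ on all of $[0,\tau)$; continuity then rules out $\tau<\infty$, so $\tau=\infty$ and the three estimates hold for all $t\ge0$. (Since gradient flow is invariant to the step-size up to time reparameterization, any fixed $\eta_0>0$ serves as the ``suitable'' choice.)

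I expect the main obstacle to be the circularity at the heart of the bootstrap: the exponential loss decay --- and therefore the weight-movement bound --- presupposes the eigenvalue lower bound $\lambda_{\min}(\what{\Theta}_t)\ge\lambda_{\min}/3$, while that lower bound in turn presupposes that the movement is small enough to keep $\what{\Theta}_t$ near $\what{\Theta}_0$. Breaking this loop requires the stopping-time device together with the quantitatively crucial observation that the kernel perturbation is $O(m^{-1/2})$ (traceable to the $1/\sqrt m$ in the Jacobian's Lipschitz constant), so that for large $m$ it can be absorbed with room to spare. A secondary subtlety, flagged by \citet{lee2019wide} and in the paper's own remark, is that the uniform-in-time control rests on the geometric decay of $\ltwo{g_t}$ keeping the integrated movement finite; extending such coupling estimates beyond a bounded horizon for the higher-order Taylorized dynamics, where the kernel genuinely moves, is precisely where the more delicate Gronwall argument of the linearized case breaks down.
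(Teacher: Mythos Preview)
Your argument is correct. The paper does not actually prove this lemma: it presents the result as ``adapted from \citep[][Appendix G]{lee2019wide}'' and uses it without further justification, so you are supplying a proof where the paper only cites one. What you have written is exactly the standard NTK gradient-flow argument from that reference, specialized to the two-layer model---explicit Jacobian computation for part~(a), then the stopping-time/bootstrap on $\lambda_{\min}(\what\Theta_t)$ for part~(b). One sharpening you make explicit and should keep: in this two-layer case the Jacobian is globally and deterministically $\tfrac{C\sqrt n}{\sqrt m}$-Lipschitz, not merely $K$-Lipschitz on a ball with high probability; you correctly identify this $1/\sqrt m$ (equivalently, the per-neuron $O(m^{-1/2})$ movement) as the quantitative source of the kernel-stability bound $\lfro{\what\Theta_t-\what\Theta_0}=O(m^{-1/2})$. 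The paper effectively recovers the same point later in Lemma~\ref{lemma:individual-weight-movement} by passing to per-neuron movements, so the displayed Frobenius bound in part~(b) is best read at the per-neuron level, exactly as you do.
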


\begin{lemma}[Properties of Taylorized training]
  \label{lemma:basic-taylorized-training}
  Lemma~\ref{lemma:basic-full-training} also holds if we replace full
  training with $k$-th order Taylorized training. More concretely, we
  have
    \begin{enumerate}[(a)]
  \item $J^{(k)}$ is locally bounded and Lipschitz:
    For any absolute constant $C>0$ there exists a constant $K>0$ such
    that for sufficiently large $m$, with high probability (over the
    random initialization $W_0$) we have
    \begin{align*}
      & \lfro{J^{(k)}(W) - J^{(k)}(\wt{W})} \le K\lfro{W -
        \wt{W}}~~~{\rm and} \\
      & \lfro{J^{(k)}(W)} \le K
    \end{align*}
    for any $W,\wt{W}\in \ball(W_0, C)$, where $\ball(W_0, R)\defeq
    \set{W\in\R^{d\times m}:\lfro{W-W_0}\le R}$ denotes a Frobenius norm
    ball.
  \item Boundedness of gradient flow: there exists an absolute $R_0>0$
    such that with high probability for sufficiently large $m$ and a
    suitable step-size choice $\eta_0$ (independent of $m$), we have
    for all $t>0$ that
    \begin{align*}
      & \ltwo{g^{(k)}_t} \le \exp(-\eta_0\lambda_{\min}t/3)R_0, \\
      & \lfro{W^{(k)}_t - W_0} \le \frac{KR_0}{\lambda_{\min}}\paren{1 -
        \exp(-\eta_0\lambda_{\min}t/3)}m^{-1/2}, \\
      & \sup_{t\ge 0} \lfro{\what{\Theta}_0 - \what{\Theta}^{(k)}_t} \le
        \frac{K^3R_0}{\lambda_{\min}}m^{-1/2}.
    \end{align*}
  \end{enumerate}
\end{lemma}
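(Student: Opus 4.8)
Lemma~\ref{lemma:basic-taylorized-training} asserts that the Taylorized Jacobian $J^{(k)}$ and the Taylorized gradient flow obey exactly the estimates that Lemma~\ref{lemma:basic-full-training} (adapted from~\cite{lee2019wide}) establishes for their full-training counterparts. The plan is to exploit the observation already recorded in~\eqref{equation:sigma-k}: for each input $x_i$ the Taylorized model is \emph{itself} a two-layer network, $f^{(k)}_W(x_i) = \frac{1}{\sqrt m}\sum_r a_r \sigma^{(k)}_{r,i}(w_r^\top x_i)$, whose ``activation'' $\sigma^{(k)}_{r,i}$ is the degree-$k$ Taylor polynomial of $\sigma$ centered at $w_{r,0}^\top x_i$. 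Consequently, once I show that these modified activations and their first two derivatives are uniformly bounded and Lipschitz over all $mn$ neuron/example pairs and over the ball $\ball(W_0,C)$, every step in the proof of Lemma~\ref{lemma:basic-full-training} transfers verbatim, since those arguments only used the boundedness and Lipschitzness of $\sigma'$.

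\textbf{Part (a).} Differentiating gives $\grad_{w_r} f^{(k)}_W(x_i) = \frac{1}{\sqrt m} a_r (\sigma^{(k)}_{r,i})'(w_r^\top x_i)\, x_i$, where $(\sigma^{(k)}_{r,i})'$ is the degree-$(k-1)$ Taylor polynomial of $\sigma'$ at $w_{r,0}^\top x_i$. Inside $\ball(W_0,C)$ the argument obeys $|(w_r - w_{r,0})^\top x_i| \le \ltwo{w_r - w_{r,0}} \le C$, so each monomial factor is bounded by $C^j$; it remains to bound the coefficients $\sigma^{(j)}(w_{r,0}^\top x_i)$ for $1 \le j \le k+1$. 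Assumption~\ref{assumption:smooth-activation} directly bounds $\sigma'$, $\sigma''$, and (from $\sigma^{(k)}$ Lipschitz) $\sigma^{(k+1)}$; the intermediate derivatives $\sigma^{(3)},\dots,\sigma^{(k)}$ are then globally bounded by a Landau--Kolmogorov interpolation inequality applied to $\phi=\sigma''$. Hence $(\sigma^{(k)}_{r,i})'$ and $(\sigma^{(k)}_{r,i})''$ are uniformly $O(1)$ on the ball, and the two conclusions follow from the same per-neuron computation as in the full case: the row for $x_i$ has squared norm $\frac{1}{m}\sum_r [(\sigma^{(k)}_{r,i})'(w_r^\top x_i)]^2 = O(1)$, giving $\lfro{J^{(k)}(W)} \le K$, while the difference of rows is controlled by the Lipschitz constant of $(\sigma^{(k)}_{r,i})'$ times $\frac{1}{\sqrt m}\ltwo{w_r - \wt w_r}$, so summing the $m$ neurons produces a Lipschitz constant of order $m^{-1/2} \le K$.

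\textbf{Part (b).} The key structural fact is that at the expansion center the Taylor polynomial's derivative collapses, $(\sigma^{(k)}_{r,i})'(w_{r,0}^\top x_i) = \sigma'(w_{r,0}^\top x_i)$, so $J^{(k)}(W_0) = J(W_0)$ and $f^{(k)}_{W_0} = f_{W_0}$. Therefore the Taylorized empirical NTK at initialization equals the full one, $\what{\Theta}^{(k)}_0 = \what{\Theta}_0$, which by Assumption~\ref{assumption:full-rank-ntk} satisfies $\lambda_{\min}(\what{\Theta}^{(k)}_0) \ge \lambda_{\min}/2$ for large $m$, and the initial residual $g^{(k)}_0 = g_0$ obeys $\ltwo{g_0} \le R_0$ with high probability. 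I then run the standard continuity (bootstrap) argument: as long as $W^{(k)}_t$ stays in the ball where (a) holds, the Lipschitzness of $J^{(k)}$ (constant $O(m^{-1/2})$) keeps $\lfro{\what{\Theta}^{(k)}_t - \what{\Theta}_0} = O(m^{-1/2})$, so $\lambda_{\min}(\what{\Theta}^{(k)}_t) \ge \lambda_{\min}/3$; feeding this into $\frac{d}{dt}\ltwo{g^{(k)}_t}^2 = -2\eta_0 (g^{(k)}_t)^\top \what{\Theta}^{(k)}_t g^{(k)}_t$ yields the exponential decay $\ltwo{g^{(k)}_t} \le \exp(-\eta_0\lambda_{\min}t/3)R_0$. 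Integrating $\lfro{\dot{W}^{(k)}_t} \le \eta_0 \lfro{J^{(k)}(W^{(k)}_t)}\ltwo{g^{(k)}_t}$ against this bound produces the parameter-movement estimate and closes the bootstrap (the trajectory never exits the ball), and reinserting the movement into the Lipschitz bound for $J^{(k)}$ gives the kernel-stability estimate, exactly mirroring the three conclusions of Lemma~\ref{lemma:basic-full-training}.

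\textbf{Main obstacle.} The crux is Part (a): unlike the full network, whose single activation $\sigma'$ is bounded by assumption, the Taylorized model carries $mn$ distinct degree-$k$ polynomial activations whose coefficients are the \emph{intermediate} derivatives $\sigma^{(2)},\dots,\sigma^{(k)}$ that Assumption~\ref{assumption:smooth-activation} does not bound directly (indeed $\sigma^{(k)}$, being only Lipschitz, may grow linearly). The difficulty is to dominate all of them by a single \emph{width-independent} constant $K$: a naive Gaussian localization of $w_{r,0}^\top x_i$ to an interval of size $O(\sqrt{\log(nm)})$ would leak $\mathrm{polylog}(m)$ factors through the top coefficient, so the argument genuinely needs the global interpolation bound on intermediate derivatives. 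Once this uniform control is in place, the kernel-conditioning and Gronwall-type arguments are identical to the linearized/full case and in fact extend to the whole trajectory $t>0$, since here we only track a single monotonically contracting residual rather than couple two trajectories.
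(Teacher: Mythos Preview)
Your proposal is correct, and for Part~(b) it coincides with the paper's argument (same NTK at initialization, same residual, then rerun the Lee et al.\ bootstrap). For Part~(a), however, you take a genuinely different route. The paper does \emph{not} bound $J^{(k)}$ directly; instead it bounds the difference $J-J^{(k)}$ via the Taylor remainder: since $(\sigma^{(k)}_{r,i})'$ is the degree-$(k-1)$ Taylor polynomial of $\sigma'$ at $w_{r,0}^\top x_i$, the integral remainder and the single hypothesis that $\sigma^{(k)}$ is $L$-Lipschitz give
\[
\bigl|\sigma'(w_r^\top x_i)-(\sigma^{(k)}_{r,i})'(w_r^\top x_i)\bigr|\le \tfrac{L}{k!}\,|(w_r-w_{r,0})^\top x_i|^{k},
\]
whence $\lfro{J(W)-J^{(k)}(W)}^2\le \tfrac{K}{m}\sum_r\ltwo{w_r-w_{r,0}}^{2k}\le K$ on $\ball(W_0,C)$, and similarly for Lipschitzness. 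Boundedness and Lipschitzness of $J^{(k)}$ then follow by subtracting from the already-known estimates for $J$ (Lemma~\ref{lemma:basic-full-training}). This bypasses entirely the ``main obstacle'' you flag---no control of the intermediate derivatives $\sigma^{(3)},\dots,\sigma^{(k)}$ is needed, and in particular no Landau--Kolmogorov interpolation. Your approach is self-contained (it does not piggyback on Lemma~\ref{lemma:basic-full-training}(a)) but pays for this with an extra analytic tool; the paper's approach is shorter and, as a bonus, the very remainder bound it computes is reused verbatim in Lemma~\ref{lemma:individual-weight-movement} (with $\ltwo{w_r-w_{r,0}}=O(m^{-1/2})$) to produce the key $O(m^{-k/2})$ estimate that drives Theorem~\ref{theorem:main-formal}.
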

\begin{proof}
  \begin{enumerate}[(a)]
  \item Rewrite the $k$-th order Taylorized
    model~\eqref{equation:two-layer-taylorized} as
    \begin{equation}
      f^{(k)}_W(x_i) = \frac{1}{\sqrt{m}}\sum_{r=1}^m a_r
      \sigma_{r,i}^{(k)}(w_r^\top x_i),
    \end{equation}
    where we have used the definition of the ``Taylorized'' activation
    function $\sigma^{(k)}_{r,i}$ in~\eqref{equation:sigma-k}.

    Our goal here is to show that $J^{(k)}$ is $K$-bounded and
    $K$-Lipschitz for some absolute constant $K$. By
    Lemma~\ref{lemma:basic-full-training}, it suffices to show the
    same for $J-J^{(k)}$, as we already have the result for the
    original Jacobian $J$. Let $W\in\ball(W_0,C)$, we have
    \begin{equation}
      \label{equation:jacobian-expansion}
      \begin{aligned}
        & \quad \lfro{J(W) - J^{(k)}(W)}^2 = \sum_{i=1}^n \frac{1}{m}
        \sum_{r=1}^m \paren{ \sigma'(w_r^\top x_i) -
          [\sigma^{(k)}]'(w_r^\top x_i) }^2 \\
        & \stackrel{(i)}{\le} \sum_{i=1}^n \frac{1}{m} \sum_{r=1}^m
        \paren{\frac{L}{k!}}^2 \paren{w_r^\top x_i - w_{r,0}^\top
          x_i}^{2k} \\
        & \le \frac{K}{m} \sum_{r=1}^m \ltwo{w_r - w_{r,0}}^{2k} \le
        \frac{K}{m} \paren{\sum_{r=1}^m \ltwo{w_r - w_{r,0}}^2}^k 
        \le KC^k/m \le K.
      \end{aligned}
    \end{equation}
    Above, (i) uses the $k$-th order smoothness of $\sigma$. This
    shows the boundedness of $J - J^{(k)}$.

    A similar
    argument can be done for the Lipschitzness of $J - J^{(k)}$, where the
    second-to-last expression is replaced by $\sum_{r\le m}\ltwo{w_r -
      w_{r,0}}^{2(k-1)}$, from which the same argument goes through as
    $2(k-1)\ge 2$ whenever $k\ge 2$, and for $k=1$ the sum is bounded
    by $K/m\cdot m=K$.

  \item This is a direct corollary of part (a), as we can view the
    Taylorized network as an architecture on its own, which has the
    same NTK as $f$ at init (so the non-degeneracy of the NTK also
    holds), and has a locally bounded Lipschitz Jacobian. Repeating
    the argument of~\citep[][Theorem G.2]{lee2019wide} gives the
    results. 
  \end{enumerate}
  % \yub{Below deprecated.}
  
  % The result for the original Jacobian $J$ is a special case of
  % \citep[Lemma 4,][Appendix G]{lee2019wide}. We now show the same result 
  % for the Taylorized Jacobian $J^{(k)}$. First, given any $W\in \ball(W_0,
  % C/\sqrt{m})$, we have $\ltwo{w_r - w_{r,0}}\le C/\sqrt{m}$, and thus
  % for all $r\in[m]$ we have
  % \begin{equation*}
  %   \ltwo{w_r} \le \ltwo{w_{r,0}} + \ltwo{w_r - w_{r,0}} \le
  %   K\sqrt{\frac{d\log m}{m}} + \frac{C}{\sqrt{m}} \le 1
  % \end{equation*}
  % for sufficiently large $m$ (where the bound on $\ltwo{w_{r,0}}$
  % follows from standard Gaussian norm
  % concentration~\cite{vershynin2018high}. ) This further implies for
  % the considered $W,\wt{W}$ that 
  % \begin{equation*}
  %   \sup_{\ltwo{x}=1,r\in[m]} \max\set{ |w_{r,0}^\top x|, |w_r^\top
  %     x|, |\wt{w}_r^\top x|} \le 1.
  % \end{equation*}
  % Therefore we have
  % \begin{align*}
  %   \frac{1}{m}\lfro{J^{(k)}(W)}^2 =
  %   \frac{1}{m}\sum_{i=1}^n\sum_{r=1}^m (\sigma^{(k)}_{r,i})'(w_r^\top x_i)^2
  %   \ltwo{x_i}^2 \le K.
  % \end{align*}
  % Similarly we have
  % \begin{align*}
  %   & \quad \frac{1}{m}\lfro{J^{(k)}(W) - J^{(k)}(\wt{W})}^2 \\
  %   & = \frac{1}{m}
  %     \sum_{i=1}^n \sum_{r=1}^m \brac{(\sigma^{(k)}_{r,i})'(w_r^\top
  %     x_i) - (\sigma^{(k)}_{r,i})'(\wt{w}_r^\top x_i)}^2 \ltwo{x_i}^2
  %     \le \frac{1}{m}\sum_{i=1}^n\sum_{r=1}^m K\ltwo{w_r - \wt{w}_r}^2
  %     \le \frac{K}{m}\lfro{W - \wt{W}}^2.
  % \end{align*}
  % (Note that the Lipschitz constant is better than the Lemma
  % statement. This is due to that we did not consider the Jacobian with
  % respect to the top-layer $a$.)
\end{proof}

% \begin{corollary}
%   \label{corollary:weight-in-ball}
%   For sufficiently large $m$, we have with high probability that
%   $\ltwo{g(W_0)}\le R_0$ for some constant $R_0>0$. Further, with a
%   proper step-size choice $\eta_0$, we have
%   $\ltwo{g_t}\le \exp(-\eta_0\lambda_{\min}t/3)R_0$ for all $t>0$ and
%   \begin{equation*}
%     \max\set{\sup_{t>0} \lfro{W_t - W_0}, \sup_{t>0}\lfro{W^{(k)}_t - W_0}}
%     \le \frac{KR_0}{\lambda_{\min}} (1 - 
%     e^{-\eta_0\lambda_{\min}t/3})m^{-1/2}.
%   \end{equation*}
%   In other words, both the full training and Taylorized training
%   does not leave a $O(1/\sqrt{m})$ Frobenius norm ball around the
%   initialization. 
% \end{corollary}
% \begin{proof}
%   The bounds for $\ltwo{g(W_0)}$ and $\sup_t\lfro{W_t-W_0}$ are shown
%   in Theorem G.2 of~\cite{lee2019wide}, and the bound for
%   $\sup_t\lfro{W^{(k)}_t-W_0}$ follows similarly as the Taylorized
%   Jacobian $J^{(k)}$ satisfies the same local Lipschitzness bound
%   (Lemma~\ref{lemma:jacobian-local-lipschitz}), and the fact that the
%   NTK for $f^{(k)}$ coincides with that of $f$ for all $k\ge 1$.
% \end{proof}

\begin{lemma}[Bounding invididual weight movements in $W_t$ and
  $W^{(k)}_t$]
  \label{lemma:individual-weight-movement}
  Under the same settings as Lemma~\ref{lemma:basic-full-training}
  and~\ref{lemma:basic-taylorized-training}, we have
  \begin{equation}
    \left\{
      \begin{aligned}
        & \max_{r\in[m]} \ltwo{w_{r,t} - w_{r,0}} \le \frac{KR_0}{\lambda_{\min}} (1 - 
        e^{-\eta_0\lambda_{\min}t/3})m^{-1/2}, \\
        & \max_{r\in[m]} \ltwo{w^{(k)}_{r,t} - w_{r,0}} \le \frac{KR_0}{\lambda_{\min}} (1 - 
        e^{-\eta_0\lambda_{\min}t/3})m^{-1/2}.
      \end{aligned}
    \right.
  \end{equation}
  Consequently, we have for $\wt{W}_t\in\set{W_t, W^{(k)}_t}$ that
  \begin{align*}
    \lfro{J(\wt{W}_t) - J^{(k)}(\wt{W}_t)} \le
    \paren{\frac{KR_0}{\lambda_{\min}}}^k \paren{1 -
    \exp(-\eta_0\lambda_{\min}t/3)}^km^{-k/2}.
  \end{align*}
\end{lemma}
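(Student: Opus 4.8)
The plan is to first bound the movement of each \emph{individual} neuron directly from the gradient-flow ODE, and then feed this per-neuron estimate into the Jacobian expansion \eqref{equation:jacobian-expansion} that was already established in the proof of Lemma~\ref{lemma:basic-taylorized-training}(a). For the full-training trajectory, note that $\grad_{w_r} f_W(x_i) = \frac{a_r}{\sqrt m}\sigma'(w_r^\top x_i)x_i$, so $\ltwo{\grad_{w_r} f_W(x_i)} \le C/\sqrt m$ since $\abs{a_r}=1$, $\abs{\sigma'}\le C$ (Assumption~\ref{assumption:smooth-activation}), and $\ltwo{x_i}=1$. Writing $\dot w_{r,t} = -\eta_0 \sum_{i=1}^n g_{t,i}\,\grad_{w_r} f_{W_t}(x_i)$ and using $\sum_i\abs{g_{t,i}}\le \sqrt n\ltwo{g_t}$ together with the exponential decay $\ltwo{g_t}\le e^{-\eta_0\lambda_{\min}t/3}R_0$ from Lemma~\ref{lemma:basic-full-training}(b), I get $\ltwo{\dot w_{r,t}} \le \eta_0 \frac{C\sqrt n R_0}{\sqrt m}\, e^{-\eta_0\lambda_{\min}t/3}$. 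Integrating over $[0,t]$ and using $\int_0^t e^{-\eta_0\lambda_{\min}s/3}\,\de s = \frac{3}{\eta_0\lambda_{\min}}(1-e^{-\eta_0\lambda_{\min}t/3})$ yields $\ltwo{w_{r,t}-w_{r,0}} \le \frac{3C\sqrt n R_0}{\lambda_{\min}}(1-e^{-\eta_0\lambda_{\min}t/3})m^{-1/2}$, which is the first claimed bound after absorbing $3C\sqrt n$ into $K$.

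For the Taylorized trajectory the computation is identical once we bound the ``Taylorized'' activation derivative. By the same $k$-th order Taylor estimate used in \eqref{equation:jacobian-expansion}, $\abs{\sigma'(w_r^\top x_i) - [\sigma^{(k)}_{r,i}]'(w_r^\top x_i)} \le \frac{L}{k!}\abs{w_r^\top x_i - w_{r,0}^\top x_i}^k \le \frac{L}{k!}C^k$ whenever $W\in\ball(W_0,C)$, hence $\abs{[\sigma^{(k)}_{r,i}]'(w_r^\top x_i)} \le C + \frac{L}{k!}C^k =: C'$, an absolute constant. Since Lemma~\ref{lemma:basic-taylorized-training}(b) gives $\sup_{t\ge 0}\lfro{W^{(k)}_t - W_0}\to 0$, for all sufficiently large $m$ the entire Taylorized trajectory lies in $\ball(W_0,C)$, so $\ltwo{\grad_{w_r} f^{(k)}_{W^{(k)}_t}(x_i)} \le C'/\sqrt m$. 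Repeating the previous paragraph with $C'$ in place of $C$ and $\ltwo{g^{(k)}_t}\le e^{-\eta_0\lambda_{\min}t/3}R_0$ (again Lemma~\ref{lemma:basic-taylorized-training}(b)) gives $\ltwo{w^{(k)}_{r,t}-w_{r,0}} \le \frac{KR_0}{\lambda_{\min}}(1-e^{-\eta_0\lambda_{\min}t/3})m^{-1/2}$.

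For the ``consequently'' part I would invoke \eqref{equation:jacobian-expansion} directly: for any $W\in\ball(W_0,C)$, that display shows $\lfro{J(W)-J^{(k)}(W)}^2 \le \frac{L^2 n}{(k!)^2}\cdot\frac1m\sum_{r=1}^m \ltwo{w_r - w_{r,0}}^{2k}$. Applying this at $W=\wt W_t\in\set{W_t, W^{(k)}_t}$ and plugging in the per-neuron bounds from the first two parts, each of the $m$ summands is at most $\paren{\frac{KR_0}{\lambda_{\min}}}^{2k}(1-e^{-\eta_0\lambda_{\min}t/3})^{2k}m^{-k}$, and the prefactor $\frac1m$ cancels the count $m$, so $\lfro{J(\wt W_t)-J^{(k)}(\wt W_t)}^2 \le \frac{L^2 n}{(k!)^2}\paren{\frac{KR_0}{\lambda_{\min}}}^{2k}(1-e^{-\eta_0\lambda_{\min}t/3})^{2k}m^{-k}$. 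Taking square roots and absorbing $L\sqrt n/k!$ into $K$ gives the stated bound.

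The only non-mechanical points are that (i) the activation-derivative and Jacobian bounds require $W^{(k)}_t\in\ball(W_0,C)$, which looks circular but is resolved by invoking the a priori boundedness estimate in Lemma~\ref{lemma:basic-taylorized-training}(b) before anything else; and (ii) the crucial extra factor of $m^{-1/2}$ per neuron — which after raising to the $2k$-th power and taking a square root produces the target $m^{-k/2}$ — comes entirely from the $1/\sqrt m$ normalization of the network together with $\abs{a_r}=1$, making each $\grad_{w_r}f$ of size $O(m^{-1/2})$ rather than $O(1)$. Everything else is the integration of a scalar exponential, so I do not expect any serious obstacle.
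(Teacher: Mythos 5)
Your proposal is correct and follows essentially the same route as the paper: bound $\ltwo{\dot w_{r,t}}$ by the $O(m^{-1/2})$ per-neuron gradient times the exponentially decaying residual from Lemmas~\ref{lemma:basic-full-training}(b)/\ref{lemma:basic-taylorized-training}(b), integrate, and then plug the per-neuron bound into the Jacobian-difference expansion~\eqref{equation:jacobian-expansion} and take square roots. If anything, you are slightly more careful than the paper (which says the Taylorized case ``follows similarly''), since you explicitly verify the boundedness of $[\sigma^{(k)}_{r,i}]'$ and that $W^{(k)}_t$ stays in $\ball(W_0,C)$ for large $m$.
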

\begin{proof}
  We first show the bound for $\ltwo{w_{r,t} - w_{r,0}}$, and the bound
  for $\ltwo{w^{(k)}_{r,t} - w_{r,0}}$ follows similarly. We have
  \begin{align*}
    \frac{d}{dt}\ltwo{w_{r,t} - w_{r,0}} \le \ltwo{\frac{d}{dt}
    w_{r,t}} \le \eta_0 \ltwo{J_{w_r}(W_t)^\top g_t} \le \eta_0\cdot
    \lfro{J_{w_r}(W_t)} \ltwo{g_t}.
  \end{align*}
  Note that
  \begin{align*}
    \lfro{J_{w_r}(W_t)}^2 = \frac{1}{m}\sum_{i=1}^n
    \sigma'(w_{r,t}^\top x_i)^2 \ltwo{x_i}^2 \le \frac{1}{m} \cdot C^2
    = C^2/m.
  \end{align*}
  due to the boundedness of $\sigma'$, and $\ltwo{g_t}\le
  \exp(-\eta_0\lambda_{\min}t/3)R_0$ by
  Lemma~\ref{lemma:basic-full-training}(b), so we have
  \begin{align*}
    & \quad \frac{d}{dt} \ltwo{w_{r,t} - w_{r,0}} \le
      \eta_0\lfro{J_{w_r}(W_t)}\ltwo{g_t} \\
    & \le \eta_0\cdot K/\sqrt{m}
      \cdot \exp(-\eta_0\lambda_{\min}t/3)R_0 =
      K\eta_0R_0\exp(-\eta_0\lambda_{\min}t/3)m^{-1/2},
  \end{align*}
  integrating which (and noticing the initial condition
  $\ltwo{w_{r,t}-w_{r,0}}|_{t=0}=0$) yields that
  \begin{align*}
    \ltwo{w_{r,t} - w_{r,0}} \le
    \frac{3KR_0}{\lambda_{\min}}\paren{1 -
    \exp(-\eta_0\lambda_{\min}t/3)}m^{-1/2}. 
  \end{align*}

  We now show the bound on $\lfro{J - J^{(k)}}$, again focusing on the
  case $\wt{W}_t\equiv W_t$ (and the case $\wt{W}_t\equiv W^{(k)}_t$
  follows similarly). By~\eqref{equation:jacobian-expansion}, we
  have
  \begin{align*}
    & \quad \lfro{J(W_t) - J^{(k)}(W_t)}^2 \le \frac{K}{m}\sum_{r=1}^m
      \ltwo{w_r - w_{r,0}}^{2k} \le
      \frac{1}{m}\sum_{r=1}^m \paren{\frac{KR_0}{\lambda_{\min}}}^{2k}
      \cdot (1 - \exp(-\eta_0\lambda_{\min}t/3))^{2k} m^{-k} \\
    & = \paren{\frac{KR_0}{\lambda_{\min}}}^{2k}
      \cdot (1 - \exp(-\eta_0\lambda_{\min}t/3))^{2k} m^{-k}.
  \end{align*}
  Taking the square root gives the desired result.
\end{proof}

We are now in position to prove the main
theorem.

\begin{proof-of-theorem}[\ref{theorem:main-formal}]
  {\bf Step 1.}
  We first bound the rate of change of $\lfro{W_t - W^{(k)}_t}$. We
  have
  \begin{align*}
    & \quad \frac{d}{dt} \lfro{W_t - W^{(k)}_t} \le \lfro{\frac{d}{dt} (W_t -
      W^{(k)}_t)} \le \eta_0 \lfro{J(W_t)^\top g_t -
      J^{(k)}(W^{(k)}_t)^\top g^{(k)}_t} \\
    & \le \underbrace{\eta_0\lfro{J(W_t) - J^{(k)}(W_t)}\cdot
      \ltwo{g_t}}_{\rm I} +
      \underbrace{\eta_0\lfro{J^{(k)}(W_t) - J^{(k)}(W^{(k)}_t)}\cdot
      \ltwo{g_t}}_{\rm II} +
      \underbrace{\eta_0\lfro{J^{(k)}(W^{(k)}_t)} \cdot \ltwo{g_t -
      g^{(k)}_t}}_{\rm III}.
  \end{align*}
  For term I, applying Lemma~\ref{lemma:individual-weight-movement}
  yields
  \begin{align*}
    & \quad {\rm I} \le \eta_0 \paren{\frac{KR_0}{\lambda_{\min}}}^{k}
      \cdot (1 - \exp(-\eta_0\lambda_{\min}t/3))^{k} m^{-k/2} \cdot
      \exp(-\eta_0\lambda_{\min}t/3) \cdot R_0 \\
    & \le \eta_0R_0 \paren{\frac{KR_0}{\lambda_{\min}}}^{k}\cdot
      \exp(-\eta_0\lambda_{\min}t/3)m^{-k/2}.
  \end{align*}
  For term II, applying the local Lipschitzness of $J^{(k)}$ and the
  fact that $W_t,W^{(k)}_t$ are in $\ball(W_0,C)$
  (Lemma~\ref{lemma:basic-taylorized-training})  yields that
  \begin{align*}
    {\rm II} \le \eta_0 \cdot K\lfro{W_t - W^{(k)}_t} \cdot
    \exp(-\eta_0\lambda_{\min}t/3)R_0 = \eta_0KR_0
    \exp(-\eta_0\lambda_{\min}t/3)\cdot \lfro{W_t - W^{(k)}_t}. 
  \end{align*}
  For term III, using the loca boundedness of $J^{(k)}$ gives that
  \begin{align*}
    {\rm III} \le \eta_0 K\cdot \ltwo{g_t - g^{(k)}_t}.
  \end{align*}
  Summing the three bounds together gives a ``master'' bound
  \begin{equation}
    \label{equation:rate-bound-w}
    \begin{aligned}
      & \quad \frac{d}{dt} \lfro{W_t - W^{(k)}_t} \\
      & \le \eta_0R_0K_1
      \exp(-\eta_0\lambda_{\min}t/3) \cdot m^{-k/2} +
      \eta_0K_2R_0\exp(-\eta_0\lambda_{\min}t/3) \cdot \lfro{W_t -
        W_t^{(k)}} + \eta_0K_3\ltwo{g_t - g^{(k)}_t}.
    \end{aligned}
  \end{equation}

  {\bf Step 2.} We now observe that the function-space difference
  $g_t - g^{(k)}_t$ obeys the equation
  \begin{align*}
    \frac{d}{dt} \paren{g_t - g^{(k)}_t} = -\eta_0
    \paren{J(W_t)J(W_t)^\top g_t -
    J^{(k)}(W^{(k)}_t)J^{(k)}(W^{(k)}_t)^\top g^{(k)}_t},
  \end{align*}
  which only differs from the equation for $W_t - W^{(k)}_t$ in having
  one more Jacobian multiplication in front. Therefore, using the
  exact same argument as in Step 1 (and using the local boundedness of
  the Jacobian), we obtain the ``master'' bound for
  $\frac{d}{dt}\ltwo{g_t - g^{(k)}_t}:$
  \begin{equation}
    \label{equation:rate-bound-g}
    \begin{aligned}
      & \quad \frac{d}{dt} \ltwo{g_t - g^{(k)}_t} \\
      & \le \eta_0R_0K_4
      \exp(-\eta_0\lambda_{\min}t/3) \cdot m^{-k/2} +
      \eta_0K_5R_0\exp(-\eta_0\lambda_{\min}t/3) \cdot \lfro{W_t -
        W_t^{(k)}} + \eta_0K_6\ltwo{g_t - g^{(k)}_t}.
    \end{aligned}
  \end{equation}

  {\bf Step 3.} Define
  \begin{align*}
    A(t) \defeq \lfro{W_t - W^{(k)}_t} + \ltwo{g_t - g^{(k)}_t}.
  \end{align*}
  Adding the two master bounds~\eqref{equation:rate-bound-w}
  and~\eqref{equation:rate-bound-g} together, we obtain
  \begin{align*}
    A'(t) \le \eta_0K_7A(t) + \eta_0R_0K_8m^{-k/2},
  \end{align*}
  so by a standard Gronwall inequality argument (i.e. considering a
  change of variable  $B(t) =\exp(-\eta_0K_7t)A(t)$) and using the
  initial condition $A(0)=0$, we obtain
  \begin{align*}
    A(t) \le \frac{\eta_0R_0K_8}{K_7} m^{-k/2}\cdot \exp(\eta_0K_7t). 
  \end{align*}
  Therefore, choosing $\eta_0$ so that
  Lemma~\ref{lemma:basic-full-training},~\ref{lemma:basic-taylorized-training}
  and~\ref{lemma:individual-weight-movement} hold, for any fixed
  $t_0>0$, we have
  \begin{align*}
    \max\set{\lfro{W_t - W^{(k)}_t}, \ltwo{g_t - g^{(k)}_t}}  = A(t)
    \le O(m^{-k/2}),
  \end{align*}
  where $O(\cdot)$ hides constants that depend on $(d, n,
  \lambda_{\min})$ and (potentially exponentially on) $t_0$, but not
  $m$. The bound on $W_t - W^{(k)}_t$ thereby gives the first part of
  the desired result.

  {\bf Step 4.} For any other test data point $x\in\R^d$ such that
  $\ltwo{x}=1$, letting $f_t(x)\defeq f_{W_t}(x)$ and $f^{(k)}_t(x)
  \defeq f^{(k)}_{W^{(k)}_t}(x)$, we have the evolution
  \begin{align*}
    \frac{d}{dt}\paren{f_t(x) - f^{(k)}_t(x)} = -\eta_0\paren{J_t(x)
    J(W_t)^\top g(t) - J^{(k)}_t(x)J^{(k)}(W_t)^\top g^{(k)}_t }.
  \end{align*}
  The local boundedness and Lipschitzness of $J_t(x)$ and
  $J^{(k)}_t(x)$ holds (and can be shown) exactly similarly as in
  Lemma~\ref{lemma:basic-full-training}
  and~\ref{lemma:basic-taylorized-training}. Decomposing the RHS and
  using a similar argument as in Step 3 gives that
  \begin{align*}
    \abs{f_t(x) - f^{(k)}_t(x)} \le O(m^{-k/2}).
  \end{align*}
  This is the second part of the desired result.
\end{proof-of-theorem}
\newpage
\section{Additional experimental results}
\label{appendix:additional-exps}

\subsection{Agreement between Taylorized and full training}
\label{appendix:approximation}
We plot results for Taylorized vs. full training on the \{CNNTHICK,
WRNTHIN, WRNTHICK\} (our three other main architectures) in
Figure~\ref{figure:cnnthick-approximation},~\ref{figure:wrnsmall-approximation},
and~\ref{figure:wrnwide-approximation}, complementing the results on
the CNNTHIN architecture in the main paper
(Section~\ref{section:approximation-exp}).

\begin{figure}[h]
  \centering
    \includegraphics[width=0.24\textwidth]{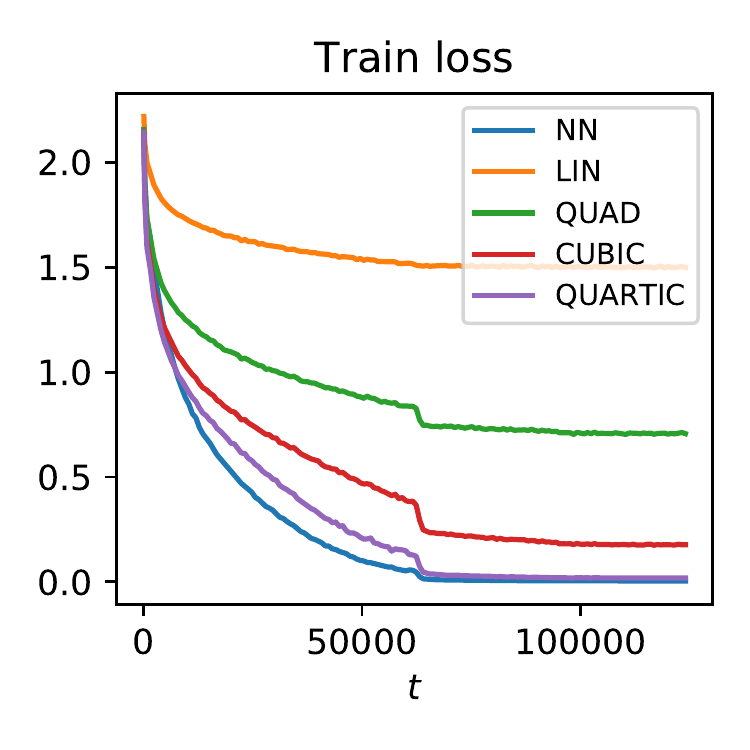}
    \includegraphics[width=0.24\textwidth]{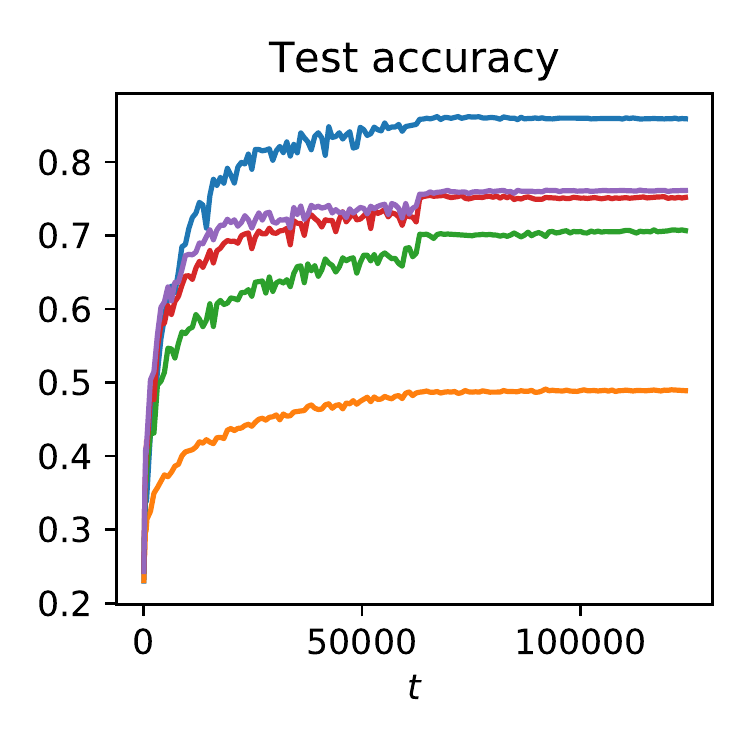}
    \includegraphics[width=0.24\textwidth]{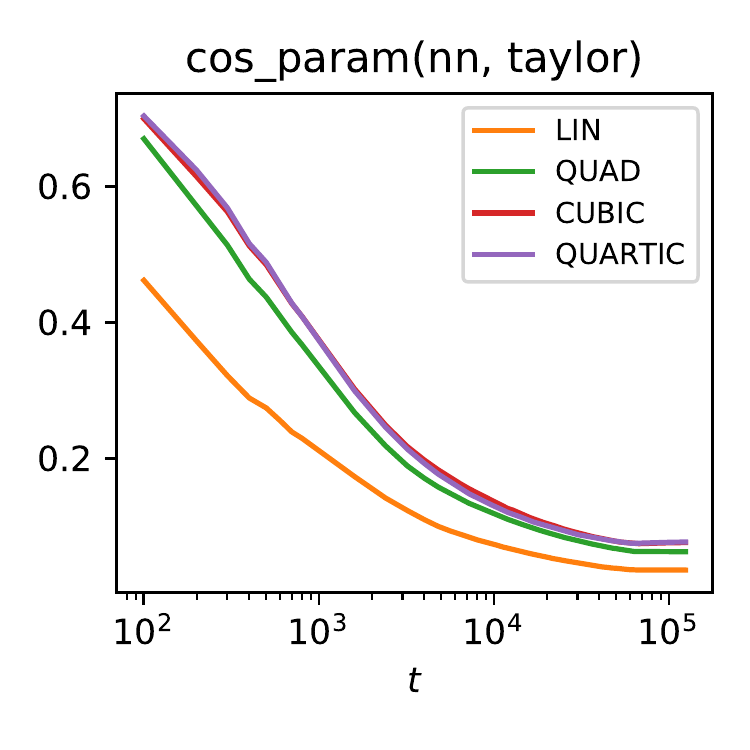}
    \includegraphics[width=0.24\textwidth]{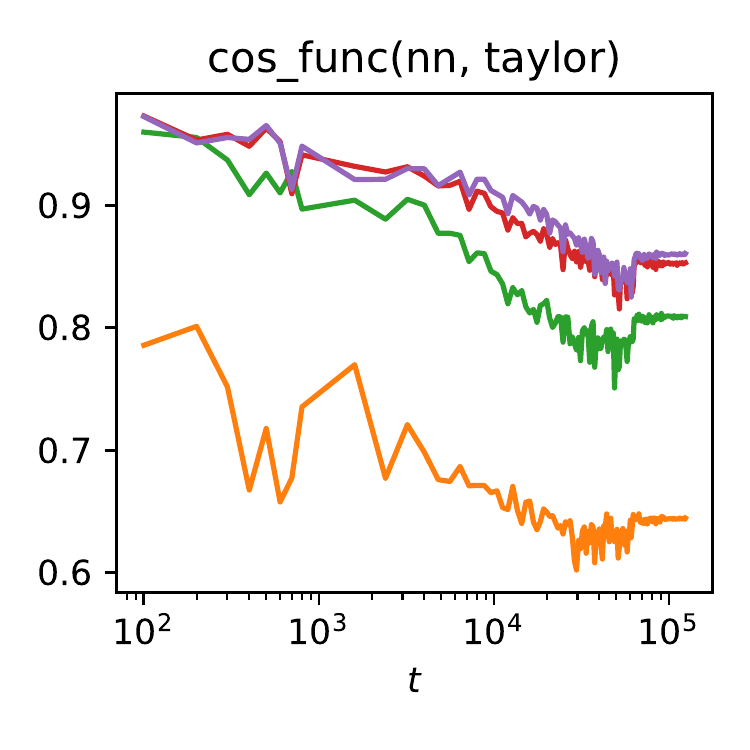}
    \caption{$k$-th order Taylorized training approximates full
      training increasingly better with $k$ on the CNNTHICK
      model. Training statistics are plotted for \{\textbf{{\color{C0}
          full}, {\color{C1} linearized}, {\color{C2} quadratic},
        {\color{C3} cubic}, {\color{C4} quartic}}\} models. Left to
      right: (1) training loss; (2) test accuracy; (3) cosine
      similarity between Taylorized and full training in the parameter
      space; (4) cosine similarity between Taylorized and full
      training in the function (logit) space. All models are trained on
      CIFAR-10 for 124000 steps, and a 10x learning rate decay happened
      at step \{62000, 93000\}.}
  \label{figure:cnnthick-approximation}
\end{figure}

\begin{figure}[h]
  \centering
    \includegraphics[width=0.24\textwidth]{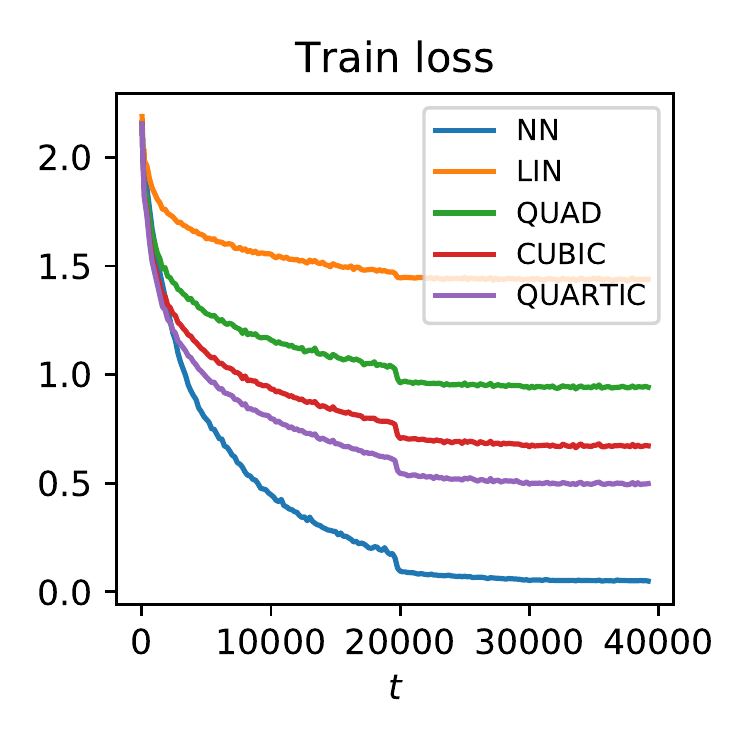}
    \includegraphics[width=0.24\textwidth]{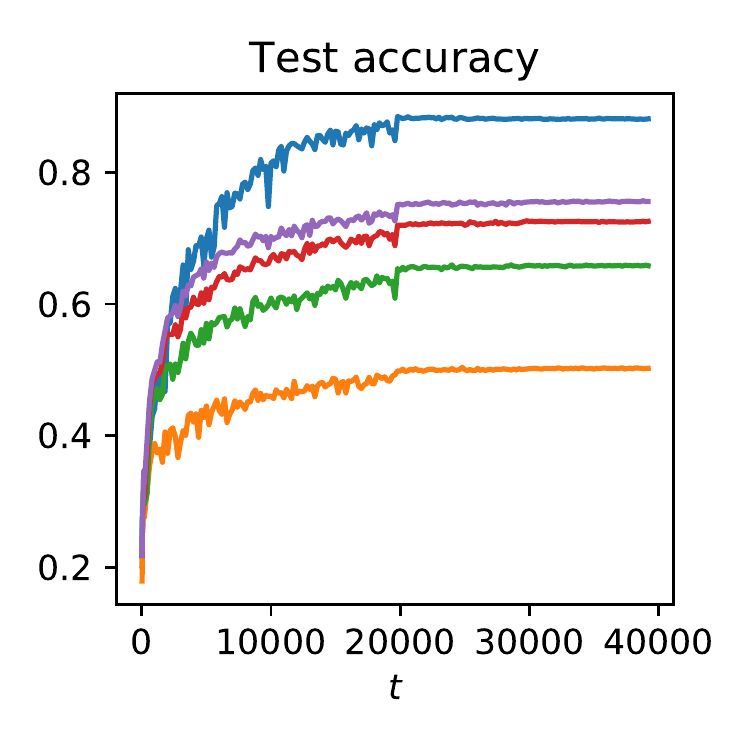}
    \includegraphics[width=0.24\textwidth]{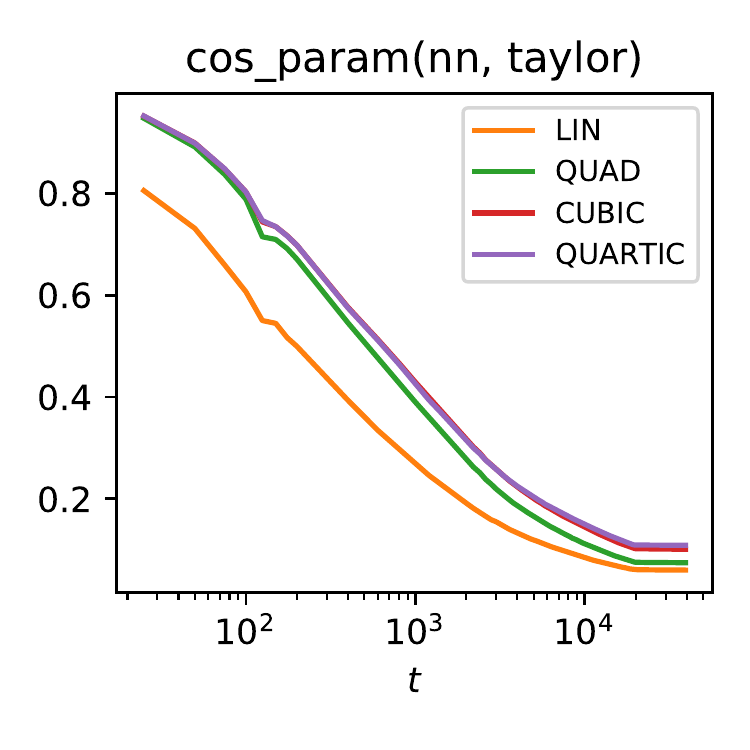}
    \includegraphics[width=0.24\textwidth]{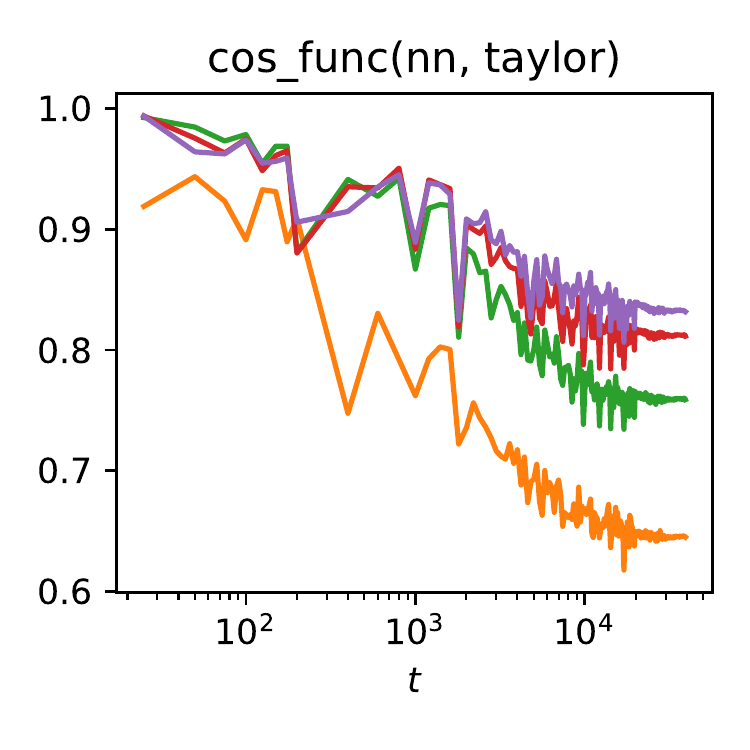}
    \caption{$k$-th order Taylorized training approximates full
      training increasingly better with $k$ on the WRNTHIN
      model. Training statistics are plotted for \{\textbf{{\color{C0}
          full}, {\color{C1} linearized}, {\color{C2} quadratic},
        {\color{C3} cubic}, {\color{C4} quartic}}\} models. Left to
      right: (1) training loss; (2) test accuracy; (3) cosine
      similarity between Taylorized and full training in the parameter
      space; (4) cosine similarity between Taylorized and full
      training in the function (logit) space. All models are trained on
      CIFAR-10 for 39200 steps, and a 10x learning rate decay happened
      at step \{19600, 29400\}.}
  \label{figure:wrnsmall-approximation}
\end{figure}

\begin{figure}[h]
  \centering
  \includegraphics[width=0.24\textwidth]{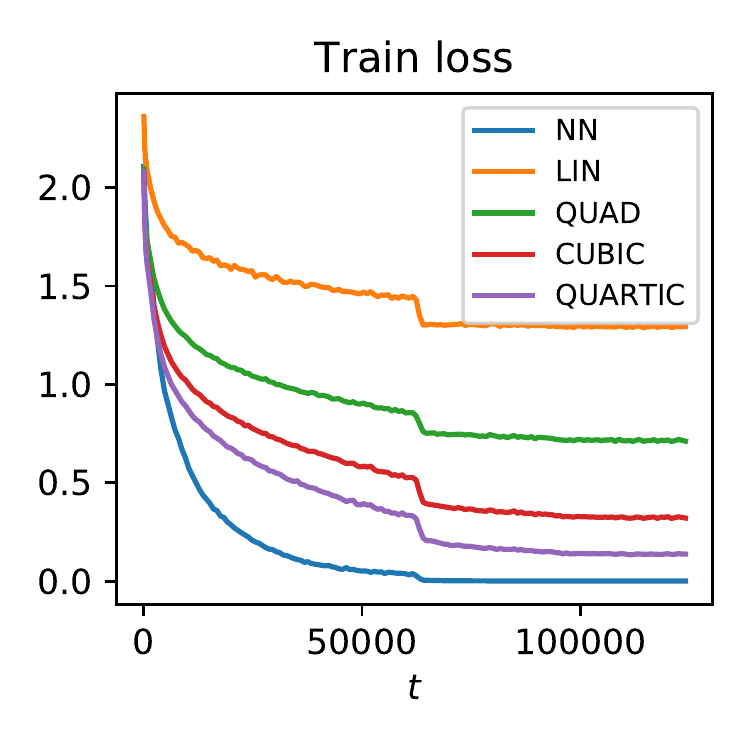}
  \includegraphics[width=0.24\textwidth]{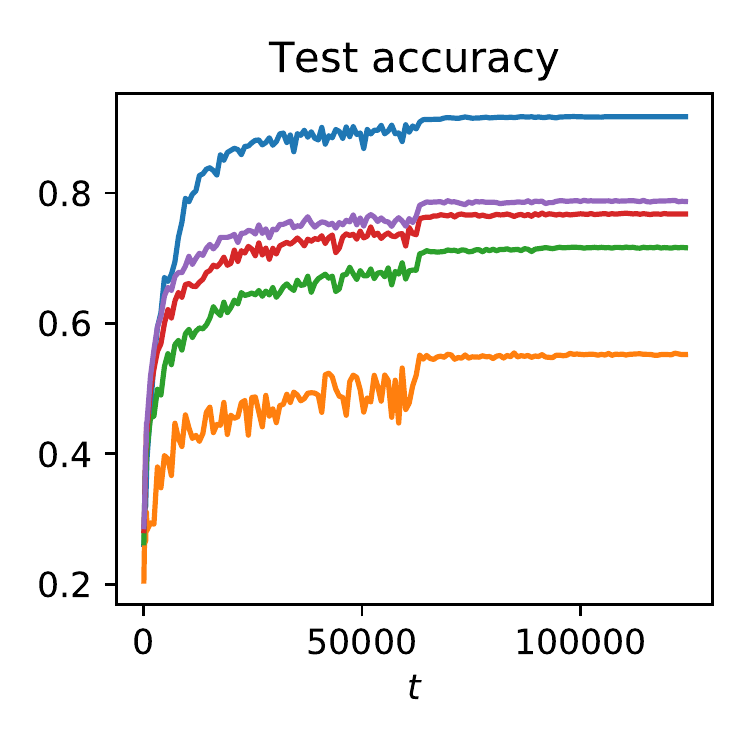}
  \includegraphics[width=0.24\textwidth]{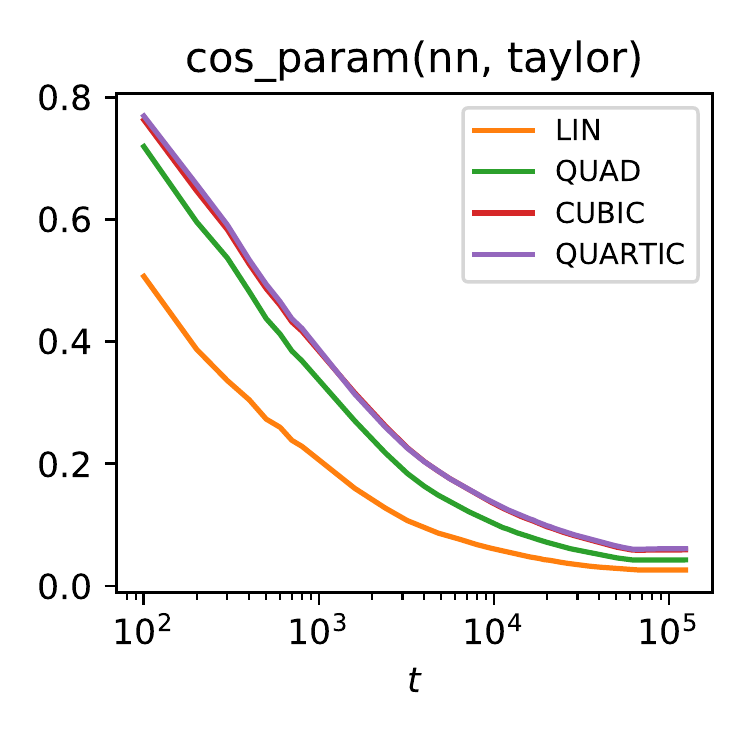}
  \includegraphics[width=0.24\textwidth]{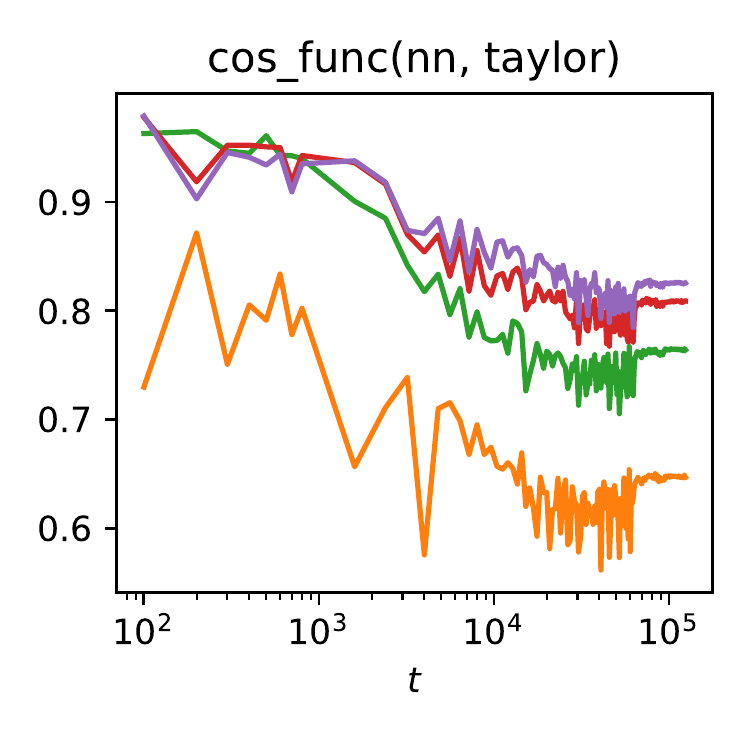}
  \caption{$k$-th order Taylorized training approximates full
    training increasingly better with $k$ on the WRNTHICK
    model. Training statistics are plotted for \{\textbf{{\color{C0}
        full}, {\color{C1} linearized}, {\color{C2} quadratic},
      {\color{C3} cubic}, {\color{C4} quartic}}\} models. Left to
    right: (1) training loss; (2) test accuracy; (3) cosine
    similarity between Taylorized and full training in the parameter
    space; (4) cosine similarity between Taylorized and full
    training in the function (logit) space. All models are trained on
    CIFAR-10 for 124000 steps, and a 10x learning rate decay happened
    at step \{62000, 93000\}.}
  \label{figure:wrnwide-approximation}
\end{figure}

\subsection{Effect of architectures / hyperparameters on Taylorized
  training}
\label{appendix:ablation}
In this section, we study the effect of various
architectural / hyperparameter choices on the approximation power of
Taylorized training. We observe the following for $k$-th order
Taylorized training for all $k\in\set{1,2,3,4}$:
\begin{itemize}
\item The approximation power gets slightly
  better (most significant in terms of the function-space
  similarity) when we increase the width (Section~\ref{appendix:width}).
\item The approximation power gets significantly better when we use a
  smaller learning rate switch from the standard parameterization to
  the NTK parameterization. However, under both settings, the learning
  slows down dramatically and do not give a reasonably-performing
  model in 100 epochs (Section~\ref{appendix:lr}).
\end{itemize}
We emphasize that (1) our observations largely agree with existing observations about linearized training~\citep{lee2019wide}, and (2) the point of these experiments is \emph{not} to demonstrate the superiority of training settings such as small learning rate / NTK parameterization, but rather to show the advantage of considering Taylorized training with a higher $k$ (instead of linearized training) in practical training regimes that do not typically use a small learning rate or NTK parameterization.

\subsubsection{Effect of width}
\label{appendix:width}
We first examine the effect of width on the approximation power of Taylorized training. For this purpose, we compare Taylorized training on the \{CNNTHIN, CNNTHICK\} models; the CNNTHICK model has the same base architecture but is 4x wider then the CNNTHIN model. We train all models under identical optimization setups (note this different from our setting in the main paper). Results are reported in Table~\ref{table:effect-width}.

Observe that widening the network has no significant effect on the test performance gap and the parameter-space cosine similarity, but increases the function-space cosine similarity for all Taylorized models.

\begin{table}[h]
  \small
  \centering
  \begin{tabular}{ll|l|l|l|l|l|l}
    \hline
    &         & \multicolumn{2}{c|}{test accuracy}   & \multicolumn{2}{c|}{cos\_param(nn, taylor)}      & \multicolumn{2}{c}{cos\_func(nn, taylor)}       \\
    \cline{3-8} 
    &         & 10 epochs & 80 epochs & 10 epochs & 80 epochs & 10 epochs & 80 epochs \\
    \hline
    \multicolumn{1}{c|}{\multirow{5}{*}{ \makecell{CNNTHIN \\ (4 layers, 128 channels)}}}  & LIN ($k=1$)  &   36.67\%    &  45.49\%   &  0.079    & 0.031  & 0.626  &  0.618              \\
    \multicolumn{1}{c|}{}                          & QUAD ($k=2$)    &  53.33\%    &  64.55\%    &  0.125  & 0.058  & 0.843  &  0.782              \\
    \multicolumn{1}{c|}{}                          & CUBIC ($k=3$)   &   59.53\%    &  70.50\%    &   0.150  & 0.078  &  0.894 & 0.823               \\
    \multicolumn{1}{c|}{}                          & QUARTIC ($k=4$) &  61.96\%     &  71.16\%    &  0.150  & 0.079  &  0.904  & 0.828               \\
    \multicolumn{1}{c|}{}                          & FULL NN   &  63.19\%  &  78.98\%           &  1.          & 1.       &  1.        &  1.              \\
    \hline
    \multicolumn{1}{l|}{\multirow{5}{*}{\makecell{CNNTHICK \\ (4 layers, 512 channels)}}} & LIN ($k=1$)     & 38.88\%     &  48.59\%   &  0.081     & 0.037        &   0.671  & 0.648               \\
    \multicolumn{1}{l|}{}                          & QUAD ($k=2$)    & 55.33\%       & 67.60\%    &  0.130   & 0.064    &  0.861    & 0.791                \\
    \multicolumn{1}{l|}{}                          & CUBIC ($k=3$)   &  61.72\%       & 71.86\%    &   0.143   & 0.075    &  0.911    & 0.834             \\
    \multicolumn{1}{l|}{}                          & QUARTIC ($k=4$) & 63.67\%     &  73.99\%   &  0.150    & 0.076    & 0.921      & 0.841               \\
    \multicolumn{1}{l|}{}                          & FULL NN   &  65.00\%          & 85.09\%  &  1.           & 1.          & 1.            & 1.              \\
    \hline
  \end{tabular}
  \caption{{\bf Effect of width on Taylorized training.} The two architectures are trained with the identical setting of batchsize=64, constant learning rate=$0.1$, and for 80 epochs. The cosine similarities are defined in Section~\ref{section:approximation-exp}. With a widened network, the function-space cosine similarity between neural net and Taylorized training becomes higher.} 
  \label{table:effect-width}
\end{table}

\subsubsection{Effect of learning rate and NTK parameterization}
\label{appendix:lr}
In our second set of experiments, we study the effect of learning rate and/or network parameterization. We choose a base setting of the CNNTHIN architecture with standard parameterization and constant learning rate 0.1, and tweak it by (1) lowering the learning rate to 0.01, or (2) switching to the NTK parameterization. Results are reported in Table~\ref{table:effect-lr}.

Observe that either lowering the learning rate or switching to NTK parameterization can dramatically improve the approximation power of Taylorized training (in terms of both function-space and parameter-space similarity), as well as  the performance gap. However, either setting significantly slows down training (as indicated by the test performance of the full neural net). 

\begin{table}[h]
  \small
  \centering
  \begin{tabular}{ll|l|l|l|l|l|l}
    \hline
    &         & \multicolumn{2}{c|}{test accuracy}   &
                                                       \multicolumn{2}{c|}{cos\_param(nn, taylor}      & \multicolumn{2}{c}{cos\_func(nn, taylor)}       \\
    \cline{3-8} 
    &         & 10 epochs & 100 epochs & 10 epochs & 100 epochs & 10 epochs & 100 epochs \\
    \hline
    \multicolumn{1}{c|}{\multirow{5}{*}{ \makecell[c]{CNNTHIN \\ standard param, lr$=0.1$}}}  & LIN ($k=1$)  &   32.94\%    &  41.22\%   &  0.156    & 0.049  & 0.649  &  0.605              \\
    \multicolumn{1}{c|}{}                          & QUAD ($k=2$)    &  44.47\%    &  59.90\%    &  0.241 & 0.083  & 0.907  &  0.803              \\
    \multicolumn{1}{c|}{}                          & CUBIC ($k=3$)   &   49.47\%    &  67.07\%    &   0.271  & 0.103  &  0.922 & 0.856               \\
    \multicolumn{1}{c|}{}                          & QUARTIC ($k=4$) &  52.13\%     &  68.64\%    &  0.272  & 0.115  &  0.920  & 0.867               \\
    \multicolumn{1}{c|}{}                          & FULL NN   &  48.85\%  &  77.47\%           &  1.          & 1.       &  1.        &  1.              \\
    \hline
    \multicolumn{1}{c|}{\multirow{5}{*}{\makecell[c]{CNNTHIN \\ standard param, lr$=0.01$}}} & LIN ($k=1$)     & 26.40\%     &  33.26\%   &  0.356    & 0.138  &   0.544  & 0.604               \\
    \multicolumn{1}{l|}{}                          & QUAD ($k=2$)    & 32.64\%       & 49.01\%    &  0.560   & 0.217    &  0.846    & 0.853                \\
    \multicolumn{1}{l|}{}                          & CUBIC ($k=3$)   &  33.44\%       & 54.34\%    &   0.606   & 0.257    &  0.956    & 0.914             \\
    \multicolumn{1}{l|}{}                          & QUARTIC ($k=4$) & 35.22\%     &  57.41\%   &  0.622   & 0.272    & 0.971      & 0.926               \\
    \multicolumn{1}{l|}{}                          & FULL NN   &  35.66\%          & 58.84\%  &  1.           & 1.          & 1.            & 1.              \\
    \hline
    \multicolumn{1}{c|}{\multirow{5}{*}{\makecell[c]{CNNTHIN \\ NTK param, lr=$0.1$}}} & LIN ($k=1$)     & 22.77\%     &  27.43\%   &  0.832    & 0.495        &   0.999  & 0.919               \\
    \multicolumn{1}{l|}{}                          & QUAD ($k=2$)    & 22.56\%       & 31.77\%    &  0.898   & 0.692    &  0.999    & 0.987                \\
    \multicolumn{1}{l|}{}                          & CUBIC ($k=3$)   &  22.70\%       & 33.23\%    &   0.899   & 0.707    &  0.999    & 0.993             \\
    \multicolumn{1}{l|}{}                          & QUARTIC ($k=4$) & 22.71\%     &  33.53\%   &  0.899    & 0.708    & 0.999      & 0.992               \\
    \multicolumn{1}{l|}{}                          & FULL NN   &  21.12\%          & 32.58\%  &  1.           & 1.          & 1.            & 1.              \\
    \hline
  \end{tabular}
  \caption{{\bf Effect of learning rate and parameterization on Taylorized training.} All models are trained with the identical settings of batchsize=256 and for 100 epochs. The cosine similarities are defined in Section~\ref{section:approximation-exp}. Observe that using a lower learning rate or the NTK parameterization can significantly imporve the approximation accuracy of Taylorized models, but meanwhile will slow down the vanilla neural network training.}
  \label{table:effect-lr}
\end{table}

% \section{Do \emph{not} have an appendix here}

% \textbf{\emph{Do not put content after the references.}}
% %
% Put anything that you might normally include after the references in a separate
% supplementary file.

% We recommend that you build supplementary material in a separate document.
% If you must create one PDF and cut it up, please be careful to use a tool that
% doesn't alter the margins, and that doesn't aggressively rewrite the PDF file.
% pdftk usually works fine. 

% \textbf{Please do not use Apple's preview to cut off supplementary material.} In
% previous years it has altered margins, and created headaches at the camera-ready
% stage. 
%%%%%%%%%%%%%%%%%%%%%%%%%%%%%%%%%%%%%%%%%%%%%%%%%%%%%%%%%%%%%%%%%%%%%%%%%%%%%%%
%%%%%%%%%%%%%%%%%%%%%%%%%%%%%%%%%%%%%%%%%%%%%%%%%%%%%%%%%%%%%%%%%%%%%%%%%%%%%%%

\end{document}